\newtheorem{theorem}{Theorem}[section]
\newtheorem{lemma}[theorem]{Lemma}
\newenvironment{proof}{\paragraph{Proof:}}{\hfill$\square$}
\newsavebox{\algleft}
\newsavebox{\algright}
\renewcommand{\sectionautorefname}{\S\kern-0.2em}      
\renewcommand{\subsectionautorefname}{\S\kern-0.2em}   
\renewcommand{\subsubsectionautorefname}{\S\kern-0.2em}
\title{Dual Policy Iteration}
\author{
  Wen Sun$^1$, Geoffrey J. Gordon$^1$, Byron Boots$^{2}$, and J. Andrew Bagnell$^3$ \\
  \\
  $^1$School of Computer Science, Carnegie Mellon University, USA\\
  $^{2}$College of Computing, Georgia Institute of Technology, USA\\
  $^{3}$Aurora Innovation, USA \\ 
  \texttt{\{wensun, ggordon, dbagnell\}@cs.cmu.edu}, \texttt{bboots@cc.gatech.edu} \\
  %% examples of more authors
  %% \And
  %% Coauthor \\
  %% Affiliation \\
  %% Address \\
  %% \texttt{email} \\
  %% \AND
  %% Coauthor \\
  %% Affiliation \\
  %% Address \\
  %% \texttt{email} \\
  %% \And
  %% Coauthor \\
  %% Affiliation \\
  %% Address \\
  %% \texttt{email} \\
  %% \And
  %% Coauthor \\
  %% Affiliation \\
  %% Address \\
  %% \texttt{email} \\
}
\begin{document}
% \nipsfinalcopy is no longer used

\maketitle

\begin{abstract}
A novel class of Approximate Policy Iteration (API) algorithms have recently demonstrated impressive practical performance (\textit{e.g.}, ExIt \cite{anthony2017thinking}, AlphaGo-Zero \cite{silver2017mastering}). This new family of algorithms maintains, and alternately optimizes, two policies: a fast, reactive policy (\textit{e.g.}, a deep neural network) deployed at test time, and a slow, non-reactive policy (e.g., Tree Search), that can plan multiple steps ahead.  The reactive policy is updated under supervision from the non-reactive policy, while the non-reactive policy is improved via guidance from the reactive policy. In this work we study this class of Dual Policy Iteration (DPI) strategy in an \emph{alternating optimization framework} and provide a convergence analysis that extends existing API theory. We also develop a special instance of this framework which reduces the update of non-reactive policies to model-based optimal control using learned local models, and provides a theoretically sound way of unifying model-free and model-based RL approaches with unknown dynamics. We demonstrate the efficacy of our approach on various continuous control Markov Decision Processes. 
\end{abstract}

% !TEX root = nips_2018.tex
\vspace{-10pt}
\section{Introduction}
\vspace{-5pt}
\label{sec:intro}
Approximate Policy Iteration (API)~\cite{bertsekas1995neuro,bagnell2004policy,kakade2002approximately,lazaric2010analysis,scherrer2014approximate}, including conservative API (CPI)~\cite{kakade2002approximately}, API driven by learned critics~\cite{rummery1994line}, or gradient-based API with stochastic policies~\cite{baxter2001infinite,bagnell2003covariant, kakade2002natural,schulman2015trust}, have played a central role in Reinforcement Learning (RL) for decades and motivated many modern practical RL algorithms. %that are successful in practice. %\cite{Tesauro, others}
%The standard framework of API considers iteratively updating a policy by: estimating the policy's state-action value function (i.e., Policy Evaluation); returning a new policy, typically in some function representation, that attempts to choose at each state the highest action-value action (i.e., Policy Update). 
%Classical policy iteration with a tabular representation can be shown to converge and often faster then value-iteration \cite{puterman2014markov}. 
%Under function approximation and continuous setting, while the vanilla API has relatively weaker performance guarantees, 
Several existing API methods \cite{bagnell2004policy, kakade2002approximately} can provide both local optimality guarantees and global guarantees under strong assumptions regarding the way samples are generated (e.g., access to a reset distribution that is similar to the optimal policy's state distribution). %Hence these API algorithms in theory relies on a good reset distribution for exploration. 
However, most modern practical API algorithms rely on myopic random exploration (e.g., REINFORCE \cite{williams1992simple} type policy gradient or $\epsilon$-greedy).
%Without a good reset distribution, these algorithms lack the ability to perform systematic exploration, so  . The s
Sample inefficiency due to random exploration can cause even sophisticated RL methods to perform worse than simple black-box optimization with random search in parameter space \cite{mania2018simple}. %Therefore, it is important to design new algorithms that perform fundamentally different exploration strategies.

%Successful API approaches can be thought of as making ``small'' changes: by conservatively mixing with previous policies \cite{kakade2002approximately}, modifying only single time steps \cite{bagnell2004policy}, or making small changes to policy parameters \cite{kakade2002natural,bagnell2003covariant}. 
Recently, a new class of API algorithms, which we call \emph{Dual Policy Iteration} (DPI), has begun to emerge. These algorithms follow a richer strategy for improving the policy, with two policies under consideration at any time during training: a reactive policy, usually learned by  %represented by 
some form of function approximation, used for generating samples and deployed at test time, and an intermediate policy that can only be constructed or accessed during training,  used as an expert policy to guide the improvement of the reactive policy. %For example, in Imitation Learning (IL), the second policy corresponds to an expert policy. Previous IL works \cite{ross2014reinforcement,sun2017deeply} propose to update the reactive policy by performing policy iteration against the expert policy (i.e., use the state-action values of the expert policy) and show that it learns much faster than regular API. The intuition is that the expert policy directly informs the reactive policy's improvement direction thereby \emph{avoiding the burden of exploration}. 
%While we cannot always assume the existence of expert in RL, we can \emph{construct} a intermediate ``expert" policy during training. %\BB{I am having trouble parsing this next sentence} 
For example, ExIt  \cite{anthony2017thinking} maintains and updates a UCT-based policy \cite{kocsis2006bandit} as an intermediate expert. ExIt then updates the reactive policy by directly imitating the tree-based policy which we expect would be \emph{better} than the reactive policy as it involves a multi-step lookahead search. AlphaGo-Zero \cite{silver2017mastering} employs a similar strategy to achieve super-human performance at the ancient game of Go. %\cite{zucker2009learning} leverage (brute-force) forward tree search to improve learning Tetris. 
The key difference that distinguishes ExIt and AlphaGo-Zero from previous APIs is that they \emph{leverage models to perform systematic forward search}: the policy resulting from forward search acts as an expert and directly informs the improvement direction for the reactive policy. Hence the reactive policy improves by imitation instead of trial-and-error reinforcement learning. 
This strategy often provides better sample efficiency in practice compared to algorithms that simply rely on locally random search (e.g., AlphaGo-Zero abandons REINFORCE from AlphaGo \cite{silver2016mastering}).

%While tree search (e.g., MCTS) is an excellent way to construct an intermediate ``expert'' policy with known dynamics, 
%However, for applications with unknown dynamics, we cannot run tree-based search. %However one can instead learn dynamics and then perform forward search using techniques such as Value Iteration (VI) and Differential Dynamic Programming (DDP) to construct the ``expert'' policy that improves upon the current reactive one \cite{levine2014learning}.   
In this work we provide a general framework for synthesizing and analyzing DPI by considering a particular \emph{alternating optimization strategy} with different optimization approaches each forming a new family of approximate policy iteration methods. 
We additionally consider the extension to the RL setting with \emph{unknown dynamics}. 
%two-player, zero-sum game. 
%Alternative strategies for solving such two-layer game framework offers a new family of approximate policy iteration methods that alternately optimizing each player.
For example, we construct a simple instance of our framework, where the intermediate expert is computed from  \emph{Model-Based Optimal Control} (MBOC) locally around the reactive policy, and the reactive policy in turn is %(e.g., Value Iteration, Differential Dynamic Programming), 
%and the reactive policy is an arbitrary function approximator 
updated incrementally under the guidance of MBOC. The resulting algorithm iteratively learns a local dynamics model, applies MBOC to compute a locally optimal policy, and then updates the reactive policy by imitation and achieve larger policy improvement per iteration than classic APIs. The instantiation shares similar spirit from some previous works from robotics and control literature, including works from \cite{atkeson1994using,atkeson2003nonparametric} and Guided Policy Search (GPS) \cite{levine2014learning} (and its variants (e.g., \cite{levine2016end,montgomery2016guided,montgomery2017reset})), i.e., using local MBOC to speed up learning global policies.

To evaluate our approach, we demonstrate our algorithm on discrete MDPs and continuous control tasks. %, including helicopter aerobatics \cite{Abbeel2005} and multiple locomotion tasks from the MuJoCo physics simulator~\cite{todorov2012mujoco}, 
 %where no good reset distribution is provided. 
 We show that by integrating local model-based search with learned local dynamics into policy improvement via an imitation learning-style update, our algorithm is substantially more sample-efficient than classic API algorithms such as CPI \cite{kakade2002approximately}, as well as more recent actor-critic baselines \cite{schulman2015high}, albeit at the cost of slower computation per iteration due to the model-based search. We also apply the framework to a \emph{robust policy optimization} setting \cite{bagnell2001autonomous,atkeson2012efficient} where the goal is to learn a \emph{single} policy that can generalize across environments. %and thus can be deployed immediately on a test environment without further training. 
 In summary, the major practical difference between DPI and many modern practical RL approaches is that instead of relying on random exploration, the DPI framework integrates local model learning, local model-based search for advanced exploration, and an imitation learning-style policy improvement, to improve the policy in a more systematic way. % so that we leave the burden of systematic exploration to model-based search and the burden of generalization to reactive policies modeled by rich function approximators, and then improve reactive policies efficiently via direct imitation.  

We also provide a general convergence analysis to support our empirical findings. %In particular, we show monotonic improvement, where the improvement mainly consists of the independent improvement resulting from each optimization direction. When neither of the optimization procedures can improve, then we show that we have reached a local optimum. 
Although our analysis is similar to CPI's, it has a key difference: as long as MBOC succeeds, %(note MBOC does not require further real samples), 
we can provide a larger policy improvement than CPI at each iteration. %, indicating the benefit of using local model-based search in training.  
Our analysis is general enough to provide theoretical intuition for previous successful practical DPI algorithms such as Expert Iteration (ExIt) \cite{anthony2017thinking}. %While the performance bound of the final learned policy from our framework is similar to the one from CPI's, improved performance guarantees in general (i.e., global optimality without a good reset distribution) is extremely challenging in large scale RL  \cite{dann2018polynomial} and is beyond the scope of this work.  
We also analyze how predictive error from a learned local model can mildly affect policy improvement and show that locally accurate dynamics---a model that accurately predicts next states \emph{under the current policy's state-action distribution}, is enough for improving the current policy.  We believe our analysis of local model predictive error versus local policy improvement can shed light on further development of model-based RL approaches with learned local models. In summary, DPI operates in the middle of two extremes: (1) API type methods that update policies locally (e.g., first-order methods like policy gradient and CPI), (2) global model-based optimization where one attempts to learn a global model and perform model-based search. First-order methods have small policy improvement per iteration and learning a global model displays greater \emph{model bias} and requires a dataset that covers the entire state space. DPI instead learns a local model and allows us to integrate models to leverage the power of model-based optimization to locally improve the reactive policy.

%locally optimal policy from which one can update the reactive policy via imitation. 

%Note that locally accurate dynamics are learnable, while global models in general are not, as learning a global model may display a greater degree of \emph{model bias} and requires a dataset that covers the entire state space. We believe our analysis of local model predictive error versus local policy improvement can shed light on further development of model-based RL approaches with learned local models. 

% !TEX root = nips_2018.tex
\vspace{-2mm}
\section{Preliminaries}
\label{sec:pre}
\vspace{-5pt}
A discounted infinite-horizon Markov Decision Process (MDP) is defined as $(\mathcal{S}, \mathcal{A}, P, c, \rho_0, \gamma)$. Here, $\mathcal{S}$ is a set of states,  $\mathcal{A}$ is a set of actions, and $P$ is the transition dynamics: %for any $s\in\mathcal{S},s'\in\mathcal{S}, a\in\mathcal{A}$, 
$P(s'|s, a)$ is the probability of transitioning to state $s'$ from state $s$ by taking action $a$.  We use $P_{s,a}$ in short for $P(\cdot|s,a)$. We denote $c(s,a)$ as the cost of taking action $a$ while in state $s$. Finally, $\rho_0$ is the initial distribution of states, and $\gamma\in(0,1)$ is the discount factor. Throughout this paper, we assume that we \emph{know} the form of the cost function $c(s,a)$, but the transition dynamics $P$ are \emph{unknown}.%\footnote{We emphasize that at least, from a theoretical perspective, finding the optimal policy of the MDP with known cost function is as difficult as finding the optimal solution with unknown cost function%, in terms of sample and computational complexity~
%\cite{jaksch2010near,azar2017minimax}.} 
%We consider the episodic learning setting. 
We define a stochastic policy $\pi$ such that for any state $s\in\mathcal{S}$, $\pi(\cdot | s)$ outputs a distribution over action space. %Conditioned on state $s$, $\pi(a|s)\in[0,1]$ is the probability of taking action $a$ at state $s$. %The distribution of trajectories $\tau = (s_1, a_1, \hdots,a_{H-1}, s_H)$ is determined by $\pi$ and the MDP, and is defined as
%\begin{align}
%\rho_{\pi}(\tau) = \rho_0(s_1) \prod_{t=2}^{H} \pi(a_{t-1}|s_{t-1})P_{t-1}(s_t|s_{t-1}, a_{t-1}).\nonumber
%\end{align}
%
%Let us define the state value function at time step $t$ for policy $\pi$ as:
%\begin{align}
%V_t^{\pi}(s) = \mathop{\mathbb{E}}_{s_t, a_t,...,a_{H-1}, s_{H}}[\sum_{i = t}^{H} r(s_i, a_i) | s_t = s, \pi],
%\end{align} where $s_t = s, a_i \sim \pi(\cdot | s_i), s_{i+1}\sim P(\cdot | s_i, a_i)$.
%
The distribution of states at time step $t$, induced by running the policy $\pi$ until and including $t$, is defined  $\forall s_t$:
%\begin{align}
$d_{\pi}^t(s_{t}) = \sum_{\{s_i,a_i\}_{i\leq t-1}} \rho_{0}(s_0)\prod_{i=0}^{t-1}\pi(a_i|s_i)P(s_{i+1}|s_{i},a_{i})$, %\nonumber
%\end{align} 
where by definition $d_{\pi}^0(s) = \rho_{0}(s)$ for any $\pi$. %Note that the summation above can be replaced by an integral if the state or action space is continuous. 
The  state visitation distribution can be computed $d_{\pi}(s) = (1-\gamma)\sum_{t=0}^{\infty}\gamma^{t}d_{\pi}^t(s)$. Denote $(d_{\pi}\pi)$ as the joint state-action distribution such that $d_{\pi}\pi(s,a) = d_{\pi}(s)\pi(a|s)$. %then the expected total discounted sum of costs of a policy $\pi$ is:
%\begin{align}
%J(\pi) = \mathbb{E}_{(s,a)\sim d_{\pi}}\left[\mathbb{E}_{a\sim\pi(\cdot|s)}\left[c(s,a)\right]\right].
%J(\pi) = \sum_{t=0}^{\infty}\mathbb{E}_{(s,a)\sim d_{\pi}^t\pi}[\gamma^t c(s,a)].
%\end{align}
%\mathbb{E}_{(s,a)\sim d_{\pi}\pi}\left[c(s,a)\right].
%\end{align}
%Now we can re-write the total reward using $d_t^{\pi}$ as follows:
%\begin{align}
%\mu(\pi) = \sum_{t=0}^H  \mathop{\mathbb{E}}_{s_t\sim d_t^{\pi}}\mathop{\mathbb{E}}_{a_t\sim \pi(\cdot| s_t)} r(s_t, a_t). 
%\end{align}
We define the value function $V^{\pi}(s)$, state-action value function $Q^{\pi}(s,a)$, and the objective function $J(\pi)$ as:
\begin{align}
&V^{\pi}(s) \!= \! \mathbb{E}\left[\sum_{t=0}^{\infty} \gamma^{t}c(s_t,a_t) | s_0 \!=\! s \right]\!, Q^{\pi}(s,a) \!=\! c(s,a) \!+\! \gamma \mathbb{E}_{s'\sim P_{s,a}} \left[V^\pi(s')\right]\!, J(\pi) \!=\! \mathbb{E}_{s\sim \rho_0}[V^{\pi}(s)]. \nonumber
\end{align} With $V^{\pi}$ and $Q^{\pi}$, the advantage function $A^{\pi}(s,a)$ is defined as $A^{\pi}(s,a) = Q^{\pi}(s,a) - V^{\pi}(s)$.  As we work in the cost setting, in the rest of the paper we refer to $A^{\pi}$ as the \emph{disadvantage} function. The goal is to learn \emph{a single stationary} policy $\pi^*$ that minimizes $J(\pi)$: $\pi^* = \arg\min_{\pi\in\Pi}J(\pi)$.

For  two distributions $P_1$ and $P_2$, $D_{TV}(P_1, P_2)$ denotes \textit{total variation distance}, which is related to the $L_1$ norm as $D_{TV}(P_1,P_2) = \|P_1 - P_2\|_1/2$ (if we have a finite probability space) and $D_{KL}(P_1,P_2) = \int_{x} P_1(x)\log(P_1(x)/P_2(x))\mathrm{d}x$ denotes the KL divergence. 

We introduce \emph{Performance Difference lemma} (PDL) \cite{kakade2002approximately}, which will be used extensively in this work: 
\begin{lemma}%[Lemma 6.1 \cite{kakade2002approximately}] 
For any two policies $\pi$ and $\pi'$, we have:
%\begin{align}
    $J(\pi) - J(\pi') =\frac{1}{1-\gamma} \mathbb{E}_{(s,a)\sim d_{\pi}\pi}\left[A^{\pi'}(s,a)\right]$.
%\end{align}
\label{lemma:pdl}
%\vspace{-10pt}
\end{lemma}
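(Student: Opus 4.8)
The plan is to prove the identity by a telescoping argument on the value function $V^{\pi'}$ evaluated along trajectories drawn from $\pi$. First I would write $J(\pi)$ as an expectation over trajectories $\tau=(s_0,a_0,s_1,a_1,\dots)$ generated by running $\pi$ from $s_0\sim\rho_0$, so that $J(\pi)=\mathbb{E}_{\tau\sim\pi}\big[\sum_{t=0}^\infty \gamma^t c(s_t,a_t)\big]$, and likewise observe $J(\pi')=\mathbb{E}_{s_0\sim\rho_0}[V^{\pi'}(s_0)]=\mathbb{E}_{\tau\sim\pi}[V^{\pi'}(s_0)]$, since the marginal of $s_0$ under $\tau\sim\pi$ is $\rho_0$ for any $\pi$.

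The key step is to insert the telescoping identity $-V^{\pi'}(s_0)=\sum_{t=0}^\infty\big(\gamma^{t+1}V^{\pi'}(s_{t+1})-\gamma^t V^{\pi'}(s_t)\big)$, which holds along any trajectory because the partial sums collapse to $\gamma^{T+1}V^{\pi'}(s_{T+1})-V^{\pi'}(s_0)$ and the first term vanishes as $T\to\infty$ (using that $c$, hence $V^{\pi'}$, is bounded and $\gamma\in(0,1)$). Substituting into $J(\pi)-J(\pi')$ and regrouping term by term gives
\[
J(\pi)-J(\pi')=\mathbb{E}_{\tau\sim\pi}\Big[\sum_{t=0}^\infty \gamma^t\big(c(s_t,a_t)+\gamma V^{\pi'}(s_{t+1})-V^{\pi'}(s_t)\big)\Big].
\]
Then, conditioning on $(s_t,a_t)$ and taking the expectation over $s_{t+1}\sim P_{s_t,a_t}$ (tower property), the inner bracket becomes $Q^{\pi'}(s_t,a_t)-V^{\pi'}(s_t)=A^{\pi'}(s_t,a_t)$ by the definitions of $Q^{\pi'}$ and $A^{\pi'}$.

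Finally I would rewrite the trajectory expectation as a sum of per-timestep marginals, $\mathbb{E}_{\tau\sim\pi}\big[\sum_t \gamma^t A^{\pi'}(s_t,a_t)\big]=\sum_{t=0}^\infty \gamma^t \mathbb{E}_{s_t\sim d_{\pi}^t,\,a_t\sim\pi(\cdot|s_t)}[A^{\pi'}(s_t,a_t)]$, and recognize that $(1-\gamma)\sum_{t=0}^\infty \gamma^t d_{\pi}^t=d_{\pi}$, so the right-hand side equals $\frac{1}{1-\gamma}\mathbb{E}_{(s,a)\sim d_{\pi}\pi}[A^{\pi'}(s,a)]$, which is the claim. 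The main obstacle is a matter of rigor rather than ideas: justifying the interchange of the infinite sum with the expectations (Fubini/Tonelli) and the vanishing of the tail term $\gamma^{T+1}V^{\pi'}(s_{T+1})$; both follow from boundedness of the cost and $\gamma\in(0,1)$, and everything else is bookkeeping with the definitions of $d_{\pi}$, $Q^{\pi'}$, and $A^{\pi'}$.
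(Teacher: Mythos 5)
Your proof is correct and is exactly the standard telescoping argument for the Performance Difference Lemma; the paper itself does not reprove this result but cites it from \cite{kakade2002approximately}, whose proof proceeds along the same lines (telescope $V^{\pi'}$ along trajectories of $\pi$, apply the tower property to recognize $A^{\pi'}$, and collapse the discounted sum of per-step marginals into $d_\pi$). Your attention to the interchange of limits and the vanishing tail term is the right place to add rigor, and both points follow from bounded costs and $\gamma\in(0,1)$ as you note.
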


% !TEX root = nips_2018.tex

\section{Dual Policy Iteration}
We propose an alternating optimization framework inspired by the PDL (Lemma~\ref{lemma:pdl}). 
%We first introduce the general alternating optimization framework and then discuss a simple instance that combines model-free and model-based updates.  
%The new game theoretical perspective of policy iteration gives us a general framework to perform policy iteration that combines model-free and model-based RL methods together.  Below we first introduce the game theoretical perspective on policy iteration.
%\subsection{Two-Player Policy Iteration}
Consider the min-max optimization framework:
%\begin{align}
%\label{eq:minmax}
     $\min_{\pi\in\Pi}\max_{\eta\in\Pi} \mathbb{E}_{s\sim d_{\pi}}\left[\mathbb{E}_{a\sim \pi(\cdot|s)}\left[A^{\eta}(s,a)\right]\right]$.
%\end{align} %The above equation immediately indicates an alternating optimization framework: one optimization direction fixes $\eta$ and updates $\pi$, while the other direction fixes $\pi$ and updates $\eta$.  
It is not hard to see that the unique Nash equilibrium for the above equation is $(\pi,\eta) = (\pi^*, \pi^*)$.
The above min-max proposes a general strategy, which we call Dual Policy Iteration (DPI): alternatively fix one policy and update the second policy. Mapping to previous practical DPI algorithms \cite{anthony2017thinking,silver2017mastering}, $\pi$ stands for the fast reactive policy and $\eta$  corresponds to the tree search policy. For notation purposes, we use $\pi_n$ and $\eta_n$  to represent the two policies in the $n^{\text{th}}$ iteration.  
Below we introduce one instance of DPI for settings with unknown models (hence no tree search), first describe how to compute $\eta_n$ from a given reactive policy $\pi_n$ (Sec.~\ref{sec:updating_expert}), and then describe how to update $\pi_n$ to $\pi_{n+1}$ via imitating $\eta_n$ (Sec.~\ref{sec:reactive}). 

\subsection{Updating $\eta$ with MBOC using Learned Local Models}
\label{sec:updating_expert}
%\drew{Player 2's update doesn't seem to depend on player 1 at all. It also uses $\mathbb{E}_{a\sim\pi_n(\cdot|s)}$ which doesn't match definition. It seems to be identical to a standard policy improvement step. Finding $\pi'$ seems nothing like solving the original problem...}
%We present a method for updating $\eta$ given a fixed $\pi$.  
Given $\pi_n$, the objective function for $\eta$  becomes:
%we present how to solve the inner maximization with respect to $\pi'$. From the performance difference lemma, we know that given a fixed $\pi_n$, solving the maximization problem below
%\begin{align} 
%\label{eq:player_two}
$\max_{\eta}\mathbb{E}_{s\sim d_{\pi_n}}\left[\mathbb{E}_{a\sim\pi_n(\cdot|s)} \left[A^{\eta}(s,a)\right]\right]$. %While one can update $\eta$ by gradient descent on $\eta$, 
%\end{align} 
%While computing the functional gradient with respect to $\eta$ measured at $\eta_n$ is complicated, it can be estimated using, for example, finite differences. 
From PDL we can see that updating $\eta$ is equivalent to finding the optimal policy $\pi^*$:
%\begin{align}
    $\arg\max_{\eta}\left( J(\pi_n) - J(\eta)\right) \equiv \arg\min_{\eta} J(\eta)$, 
    %\nonumber
%\end{align} 
regardless of what $\pi_n$ is. As directly minimizing $J(\eta)$ is as hard as the original problem, we update $\eta$ locally by constraining it to a trust region around $\pi_n$: %defined with Total Variation (TV) distance between $\pi'$ and $\pi$:
\begin{align}
%\label{eq:br_1}
    %\arg\max_{\eta} \mathbb{E}_{s\sim d_{\pi_n}}\left[\mathbb{E}_{a\sim\pi_n(\cdot|s)} \left[A^{\eta}(s,a)\right]\right], \\
    \arg\min_{\eta}J(\eta), \;\;s.t., \mathbb{E}_{s\sim d_{\pi_n}} D_{TV}[\left(\eta(\cdot|s), \pi_n(\cdot|s)\right)]\leq \alpha. \label{eq:br_1_constraint} 
\end{align}
%\subsubsection{Updating via MBOC with Learned Local Models}
%\label{sec:mboc}
To solve the constraint optimization problem in Eq~\ref{eq:br_1_constraint},
%The constrained optimization problem in Eq~\ref{eq:br_1_constraint} is nontrivial, as we do not know the transition dynamics $P_{s,a}$. If we \emph{did} know the dynamics, then we could leverage any Model-Based Optimal Control (MBOC) algorithm to solve this problem. Therefore, 
we propose to learn $P_{s,a}$ and use it with any off-the-shelf model-based optimal control algorithm. Moreover, thanks to the trust region, we can simply learn a \emph{local} dynamics model, \emph{under the state-action distribution $d_{\pi_n}\pi_n$}.
%as it is essentially solving the original RL problem under some constraint:
%\begin{align}
%\label{eq:br_1}
%    \min_{\pi}J(\pi), s.t., \max_{s}D_{TV}(\pi(\cdot|s), \pi_n(\cdot|s)) \leq \alpha.
%\end{align} 
We denote the optimal solution to the above constrained optimization (Eq.~\ref{eq:br_1_constraint}) under the \emph{real} model $P_{s,a}$ as $\eta^*_n$. Note that, due to the definition of the optimality, $\eta^*_n$ must perform better than $\pi_n$: $J(\pi_n) - J(\eta^*_n) \geq \Delta_{n}(\alpha)$, where $\Delta_n(\alpha)\geq 0$ is the performance gain from $\eta^*_n$ over $\pi_n$. %Here the size of $\Delta_n(\alpha)$ depends on the size of the trust-region. 
When the trust region expands, i.e., $\alpha$ increases, then  $\Delta_n(\alpha)$ approaches the performance difference between the optimal policy $\pi^*$ and $\pi_n$. 

%In general, learning a globally accurate transition model $\hat{P}$ is difficult as as we explained in Sec.~\ref{sec:intro}. Luckily, to solve the trust-region optimization problem in Eq.~\ref{eq:br_1}, we do not need a global accurate model. 

To perform MBOC, we learn a locally accurate model---a model $\hat{P}$ that is close to $P$ \emph{under the state-action distribution induced by $\pi_n$}: we seek a model $\hat{P}$, such that the quantity $\mathbb{E}_{(s,a)\sim d_{\pi_n}\pi_n}D_{TV}(\hat{P}_{s,a}, P_{s,a})$ is small.  Optimizing $D_{TV}$ directly is hard, but note that, by Pinsker's inequality, we have $D_{KL}({P}_{s,a}, \hat{P}_{s,a})\geq D_{TV}(\hat{P}_{s,a}, P_{s,a})^2$, which indicates that we can optimize a surrogate loss defined by a KL-divergence:
\begin{align}
    &\arg\min_{\hat{P}\in\textbf{P}} \mathbb{E}_{s\sim d_{\pi_n},a\sim \pi_n(s)} D_{KL}(P_{s,a}, \hat{P}_{s,a})  = \arg\min_{\hat{P}\in\textbf{P}} \mathbb{E}_{s\sim d_{\pi_n},a\sim \pi_n(s),s'\sim P_{s,a}} [-\log \hat{P}_{s,a}(s')], 
    \label{eq:mle_1}
\end{align} where we denote $\mathbf{P}$ as the model class. Hence we reduce the local model fitting problem into a classic maximum likelihood estimation (MLE) problem, where the training data $\{s,a,s'\}$ can be easily collected by executing $\pi_n$ on the real system (i.e., $P_{s,a}$). As we will show later, to ensure policy improvement, we just need a learned model to perform well under $d_{\pi_n}\pi_n$ (i.e., no training and testing distribution mismatch as one will have for global model learning). %i.e., no training and testing distribution mismatch as one would easily encounter if one attempted to learn a global model. 
For later analysis purposes, we denote $\hat{P}$ as the MLE in Eq.~\ref{eq:mle_1} and assume  $\hat{P}$ is $\delta$-optimal under $d_{\pi_n}\pi_n$:
\begin{align}
\label{eq:local_model}
    \mathbb{E}_{(s,a)\sim d_{\pi_n}\pi_n}D_{TV}(\hat{P}_{s,a},P_{s,a}) \leq \delta,
\end{align} where $\delta\in \mathbb{R}^+$ is controlled by the complexity of model class $\mathbf{P}$ and by the amount of training data we sample using $\pi_n$, which can be analyzed by standard supervised learning theory. 
After achieving a locally accurate model $\hat{P}$, we solve Eq.~\ref{eq:br_1_constraint} using any existing stochastic MBOC solvers. Assume a MBOC solver returns an optimal policy $\eta_n$ under the estimated model $\hat{P}$ subject to trust-region:
\begin{align}
\label{eq:local_optimal_oc}
    &\,\,{\eta}_n = \arg\min_{\pi}J(\pi), 
     s.t.,  \,\, s_{t+1}\sim \hat{P}_{s_t,a_t},\,\,  \mathbb{E}_{s\sim d_{\pi_n}}D_{TV}(\pi, \pi_n)\leq \alpha.
\end{align} At this point, a natural question is: If ${\eta}_n$ is solved by an MBOC solver under $\hat{P}$, by how much can ${\eta}
_n$ outperform $\pi_n$ when \emph{executed under the real dynamics $P$}? 
Recall that the performance gap between the real optimal solution $\eta^*_n$ (optimal under $P$)  and $\pi_n$ is denoted as $\Delta_{n}(\alpha)$. The following theorem quantifies the performance gap between ${\eta}_n$ and $\pi_n$ using the learned local model's predictive error $\delta$:
\begin{theorem} 
\label{them:local_oc_theorem}
Assume $\hat{P}_{s,a}$ satisfies Eq.~\ref{eq:local_model}, and $\eta_n$ is the output of a MBOC solver for the optimization problem defined in Eq.~\ref{eq:local_optimal_oc}, then we have:
\begin{align}
J({\eta}_n) \leq J(\pi_n) - \Delta_n(\alpha) + O\left(\frac{\gamma\delta}{1-\gamma} +\frac{\gamma\alpha}{(1-\gamma)^2}\right). \nonumber
\end{align}
\end{theorem}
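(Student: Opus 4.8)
The strategy is to bound $J(\eta_n) - J(\eta^*_n)$ — the gap between the policy returned by the MBOC solver under the learned model $\hat{P}$, and the true optimal trust-region policy $\eta^*_n$ under $P$ — and then combine with the definition $J(\pi_n) - J(\eta^*_n) \geq \Delta_n(\alpha)$. Since $\eta^*_n$ is feasible for the constrained problem (its TV radius from $\pi_n$ is $\leq \alpha$), the core task is a \emph{simulation lemma}: controlling how much the objective value of a fixed policy can change when we swap the true dynamics $P$ for the approximate dynamics $\hat{P}$.

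\paragraph{Step 1: A simulation lemma under the trust-region.} For any policy $\pi$ with $\mathbb{E}_{s\sim d_{\pi_n}}D_{TV}(\pi(\cdot|s),\pi_n(\cdot|s))\leq\alpha$, I want to show $|J^P(\pi) - J^{\hat{P}}(\pi)| = O\!\left(\frac{\gamma\delta}{1-\gamma} + \frac{\gamma\alpha}{(1-\gamma)^2}\right)$, where superscripts denote which dynamics define the value. The standard argument (telescoping over the two MDPs, à la Kearns–Singh / the PDL applied across models) gives a bound of the form $\frac{\gamma}{1-\gamma}\mathbb{E}_{(s,a)\sim d^P_{\pi}\pi}\, D_{TV}(\hat{P}_{s,a}, P_{s,a})\cdot(\text{value range})$, i.e.\ the model error must be measured under $\pi$'s \emph{own} state-action distribution $d^P_\pi\pi$. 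But our assumption Eq.~\ref{eq:local_model} only controls the error under $d_{\pi_n}\pi_n$. The bridge is: because $\pi$ is $\alpha$-close to $\pi_n$ in TV on $d_{\pi_n}$, the induced distributions $d^P_\pi\pi$ and $d_{\pi_n}\pi_n$ are close in TV — a short inductive/unrolling argument shows $D_{TV}(d^P_\pi\pi,\, d_{\pi_n}\pi_n) = O\!\left(\frac{\alpha}{1-\gamma}\right)$. Hence $\mathbb{E}_{(s,a)\sim d^P_\pi\pi} D_{TV}(\hat{P}_{s,a},P_{s,a}) \leq \delta + O\!\left(\frac{\alpha}{1-\gamma}\right)$, and plugging this in (together with the crude bound that value functions are $O(1/(1-\gamma))$ if costs are $O(1)$, or using the advantage form to get one factor of $1/(1-\gamma)$) yields the claimed $O\!\left(\frac{\gamma\delta}{1-\gamma} + \frac{\gamma\alpha}{(1-\gamma)^2}\right)$.

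\paragraph{Step 2: Transfer optimality across models.} Let $\hat{\eta}^*_n$ be optimal for the trust-region problem under $\hat P$. By definition $\eta_n$ is (at least as good as) $\hat{\eta}^*_n$ under $\hat P$, so $J^{\hat P}(\eta_n) \leq J^{\hat P}(\eta^*_n)$, since $\eta^*_n$ is feasible under the same TV constraint. Now sandwich:
\begin{align}
J^P(\eta_n) &\leq J^{\hat P}(\eta_n) + \epsilon(\alpha,\delta) \leq J^{\hat P}(\eta^*_n) + \epsilon(\alpha,\delta) \leq J^P(\eta^*_n) + 2\epsilon(\alpha,\delta), \nonumber
\end{align}
where $\epsilon(\alpha,\delta) = O\!\left(\frac{\gamma\delta}{1-\gamma} + \frac{\gamma\alpha}{(1-\gamma)^2}\right)$ is the Step-1 bound, applied once to $\eta_n$ and once to $\eta^*_n$ (both satisfy the $\alpha$-trust-region, so Step~1 applies to each). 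Combining with $J^P(\eta^*_n) \leq J(\pi_n) - \Delta_n(\alpha)$ gives exactly $J(\eta_n) \leq J(\pi_n) - \Delta_n(\alpha) + O\!\left(\frac{\gamma\delta}{1-\gamma} + \frac{\gamma\alpha}{(1-\gamma)^2}\right)$.

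\paragraph{Main obstacle.} The delicate point is Step~1 — specifically, justifying that the model error under $d_{\pi_n}\pi_n$ (the only thing we assumed, Eq.~\ref{eq:local_model}) suffices to control the model error under the \emph{perturbed} distribution $d^P_\pi\pi$ for every $\pi$ in the trust region. This requires carefully unrolling the state-distribution recursion and showing that an $\alpha$ per-step TV perturbation in the policy accumulates to only $O(\alpha/(1-\gamma))$ in the discounted occupancy measure (not something worse), and then tracking how this extra $O(\alpha/(1-\gamma))$ term, once multiplied by the $\gamma/(1-\gamma)$ horizon factor from the telescoping simulation argument, produces precisely the $\frac{\gamma\alpha}{(1-\gamma)^2}$ term. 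One must also be careful that the simulation lemma is applied in a form that keeps only one extra $1/(1-\gamma)$ on the $\delta$ term (via advantages/PDL across models) rather than two, to match the stated rate; getting the bookkeeping of the $(1-\gamma)$ powers exactly right is the crux.
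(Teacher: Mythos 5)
Your proposal is correct and follows essentially the same route as the paper's proof: the paper packages your Step 1 and Step 2 into a single application of the model-difference identity from Ross and Bagnell (2012) (which bounds $J(\eta_n)-J(\eta^*_n)$ by model errors under $d_{\eta_n}\eta_n$ and $d_{\eta^*_n}\eta^*_n$ after using $\mathbb{E}_{\rho_0}[\hat{V}^{\eta_n}-\hat{V}^{\eta^*_n}]\leq 0$), and then performs exactly your "bridge," using the trust region and Lemma~\ref{lemma:state_dis_diff} to show both occupancy measures are within $O(\alpha/(1-\gamma))$ of $d_{\pi_n}\pi_n$ in $L_1$, yielding the $\gamma\delta/(1-\gamma)+\gamma\alpha/(1-\gamma)^2$ terms before invoking the definition of $\Delta_n(\alpha)$.
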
 The proof of the above theorem can be found in Appendix~\ref{sec:proof_of_oc}.
 Theorem~\ref{them:local_oc_theorem} indicates that when the model is \emph{locally accurate}, i.e., $\delta$ is small (e.g., \textbf{P} is rich and we have enough data from $d_{\pi_n}\pi_n$), $\alpha$ is small, and there exists a local optimal solution that is significantly better than the current policy $\pi_n$ (i.e., $\Delta_n(\alpha)\in\mathbb{R}^+$ is large), then the OC solver with the learned model $\hat{P}$ finds a nearly local-optimal solution ${\eta}_n$ that outperforms $\pi_n$. With a better $\eta_n$, now we are ready to improve $\pi_{n}$ via imitating $\eta_n$. %Below we use more advanced IL techniques from \cite{ross2014reinforcement,sun2017deeply}, which guarantees 
 
 %Previous works also considered how inaccurate models can be used to improve policies incrementally (e.g., \cite{abbeel2006using} analyzed it via studying the Jacobians of the given global model), but we analyze how the predictive error from a learned local model will affect the policy improvement. 

\subsection{Updating $\pi$ via Imitating $\eta$}
\label{sec:reactive}
Given $\eta_n$, we compute $\pi_{n+1}$ by performing the following constrained optimization procedure:
\begin{align}
%\label{eq:PI}
    &\arg\min_{\pi} \mathbb{E}_{s\sim d_{\pi_n}}\left[\mathbb{E}_{a\sim\pi(\cdot|s)}\left[A^{\eta_{n}}(s,a)\right]\right], s.t., %\max_{s}D_{TV}(\pi(\cdot|s), \pi_n(\cdot|s)) \leq \beta. 
    \mathbb{E}_{s\sim d_{\pi_n}}\left[D_{TV}(\pi(\cdot|s), \pi_n(\cdot|s))\right] \leq \beta \label{eq:trust_region_1}
\end{align}

%\drew{Why is the $\mathbb{E}_{s\sim d_{\pi_n}}$ w.r.t. $\pi_n$ ? The defnition has it w.r.t. $\pi$?}

Note that the key difference between Eq.~\ref{eq:trust_region_1} and classic API policy improvement procedure is that 
%the formulation in Eq.~\ref{eq:trust_region_1} is different from existing API algorithms (e.g., CPI) which use the disadvantage function $A^{\pi_n}(s,a)$ of the current policy $\pi_n$ while in our case, 
we use $\eta_n$'s disadvantage function $A^{\eta_n}$, i.e., we are performing imitation learning by treating $\eta_n$ as an expert in this iteration \cite{ross2014reinforcement,sun2017deeply}.  We can solve Eq.~\ref{eq:trust_region_1} by converting it to supervised learning problem such as cost-sensitive classification \cite{kakade2002approximately} by sampling states and actions from $\pi_n$ and estimating $A^{\eta_n}$ via rolling out $\eta_n$, subject to an L1 constraint.%where the features are states i.i.d sampled from $d_{\pi_n}$, labels are corresponding to actions (assume we have discrete actions), and the cost of a state and label pair $(s,a)$ is $A^{\pi'_n}(s,a)$, which can be estimated, for instance, by rolling out using $\pi'_n$ from $s,a$. 
%\footnote{Previous work used a stronger form of trust region: $\max_{s}D_{TV}(\pi(\cdot|s),\pi_n(\cdot|s))$, which is not empirically measurable, while ours can be evaluated under samples from $d_{\pi_n}$.}

%It is worth pointing out that the trust region definition in Eq.~\ref{eq:trust_region_1} is empirically measurable using samples from $d_{\pi_n}$ (i.e., executing $\pi_n$ on real systems to generate $s$), while the analysis in previous work (i.e., TRPO  \cite{schulman2015trust}) used a stronger form $\max_{s\in\mathcal{S}}D_{TV}(\pi(\cdot|s),\pi_n(\cdot|s))$, which is not empirically measurable.}%\footnote{In fact, the authors in \cite{schulman2015trust} suggested relaxing  $\max_{s\in\mathcal{S}}$ to $\mathbb{E}_{s\sim d_{\pi_n}}$ for practical implementations of TRPO.} 

Note that a CPI-like update approximately solves the above constrained problem as well:  %Following CPI, we first solve Eq.~\ref{eq:trust_region_1} without considering the constraint at all (no trust region) to obtain a policy $\pi^*_n$. We then conservatively update $\pi_{n}$ to $\pi_{n+1}$ as:
%A CPI-like update computes $\pi_{n+1}$ from $\pi_n$ as:
\begin{align}
\label{eq:conservative_update}
    \pi_{n+1} = (1-\beta)\pi_n + \beta\pi_{n}^*, \text{ where }\pi_n^* = \arg\min_{\pi} \mathbb{E}_{s\sim d_{\pi_n}}\left[\mathbb{E}_{a\sim \pi(\cdot|s)}[A^{\eta_n}(s,a)]\right].
\end{align}
Note that $\pi_{n+1}$ satisfies the constraint as $D_{TV}(\pi_{n+1}(\cdot|s), \pi_n(\cdot|s)) \leq \beta, \forall s$. Intuitively, the update in Eq.~\ref{eq:conservative_update} can be understood as first solving the objective function to obtain $\pi_{n}^*$ without considering the constraint, and then moving $\pi_n$ towards $\pi_{n}^*$ until the boundary of the constraint is reached. %For a parameterized policy, in Sec.~\ref{sec:practical_ng}, we will introduce a corresponding natural gradient update procedure.

%\drew{I would consider using an entirely different letter for $\pi'$. It's really hard to read the difference. Perhaps $eta$? Given they have different functional forms this might also help. }

\subsection{DPI: Combining Updates on $\pi$ and $\eta$}
In summary, assume MBOC is used for Eq.~\ref{eq:br_1_constraint}, DPI operates in an iterative way: with $\pi_n$:
\begin{enumerate}[nolistsep,noitemsep]
\item Fit MLE $\hat{P}$ on states and actions from $d_{\pi_n}\pi_n$ (Eq.~\ref{eq:mle_1}).
\item $\eta_{n} \leftarrow$ MBOC($\hat{P}$), subject to trust region $\mathbb{E}_{s\sim d_{\pi_n}}D_{TV}(\pi,\pi_{n})\leq \alpha$ (Eq.~\ref{eq:local_optimal_oc})
\item Update to $\pi_{n+1}$ by imitating $\eta_n$,  subject to trust region $\mathbb{E}_{s\sim d_{\pi_n}}D_{TV}(\pi,\pi_n)\leq \beta$ (Eq.~\ref{eq:trust_region_1}).
\end{enumerate} 
The above framework shows how $\pi$ and $\eta$ are tightened together to guide each other's improvements: the first step  corresponds classic MLE under $\pi_n$'s state-action distribution: $d_{\pi_n}\pi_n$; the second step corresponds to model-based policy search around $\pi_n$ ($\hat{P}$ is only locally accurate); the third step corresponds to updating $\pi$ by imitating $\eta$ (i.e., imitation).  Note that in practice MBOC solver (e.g., a second order optimization method, as we will show in our practical algorithm below) could be computationally expensive and slow (e.g. tree search in ExIt and AlphaGo-Zero), but once $\hat{P}$ is provided, MBOC does not require additional samples from the real system.

\paragraph{Connections to Previous works} We can see that the above framework generalizes several previous work from API and IL. \textbf{(a)} If we set $\alpha = 0$ in the limit, we reveal CPI (assuming we optimize with Eq.~\ref{eq:conservative_update}), i.e., no attempt to search for a better policy using model-based optimization. \textbf{(b)} Mapping to ExIt, our $\eta_n$ plays the role of the tree-based policy, and our $\pi_n$ plays the role of the apprentice policy, and MBOC plays the role of forward search. %We force $\eta$ to stay close to $\pi$ with a trust region (for the purpose of the tractability of dynamics learning and MBOC), while ExIt ensures it by forming $\eta$ as the weighted mixing of the search tree and the apprentice policy. %Note that forcing $\eta_n$ to stay close to $\pi_n$ is even more important in our setting as we can only guarantee the learned model is accurate under state-action distribution $d_{\pi_n}\pi_n$ induced by $\pi_n$. 
%One major difference in updating $\pi$ is that ExIt uses DAgger \cite{Ross2011_AISTATS} to update $\pi$ by attempting to minimize the counts of mismatches with respect to $\eta$, while we perform cost-sensitive classification with loss formed by the disadvantage vector $A^{\eta}(s,\cdot)\in\mathbb{R}^{|\mathcal{A}|}$, which enables us to link the imitation performance directly to the ultimate performance $J(\pi)$,  and allows $\pi$ to achieve a one-step deviation improvement over $\eta_n$ \cite{ross2014reinforcement,sun2017deeply}. We provide a detailed analysis of using DAgger versus a cost-sensitive update with cost formed by $A^{\eta}$ in Appendix~\ref{sec:DAgger_stype}. 
\textbf{(c)} when an optimal expert policy $\pi^*$ is available during and only during training, we can set every $\eta_n$ to be $\pi^*$, and DPI then reveals a previous IL algorithm---\textsc{AggreVaTeD} \cite{sun2017deeply}.
%Now assume that Eq.~\ref{eq:PI} returns a policy $\pi_{n+1}$ such that:

\section{Analysis of Policy Improvement}
%Combining the update rules for $\pi$ and $\eta$, we get a general framework for two-player policy iteration, which can start with Player One initializing $\pi_0$, Player Two computes $\pi_n'$, and then Player One updates to $\pi_{n+1}$. In this section, we show that if two players can perform their local policy search perfectly, the two-player policy iteration can monotonically improve the performance. 
We provide a general convergence analysis for DPI.  The trust region constraints in Eq.~\ref{eq:br_1_constraint} and Eq.~\ref{eq:trust_region_1} tightly combines MBOC and policy improvement together, and is the key to ensure monotonic improvement and achieve larger policy improvement per iteration than existing APIs. %that do not consider model-based search. %For clarity and simplicity, we directly work under expectation (e.g., $\mathbb{E}_{s\sim d_{\pi_n}}$), though finite sample analysis can be carried out. \footnote{ Note that DPI has two steps involve samples: an IL step and a MLE step, both of which have well-established sampled complexity analysis. We leave it as future work.}

Define $\mathbb{A}_{n}(\pi_{n+1})$ as the disadvantage of $\pi^{n+1}$ over $\eta_n$ under $d_{\pi_n}$:
%\begin{align}
$\mathbb{A}_{n}(\pi_{n+1}) = \mathbb{E}_{s\sim d_{\pi_n}}\left[\mathbb{E}_{a\sim\pi_{n+1}(\cdot|s)}\left[A^{\eta_n}(s,a)\right]\right]$. %\nonumber
%\end{align} 
Note that $\mathbb{A}_n(\pi_{n+1})$ is at least  non-positive (if $\pi$ and $\eta$ are from the same function class, or $\pi$'s policy class is rich enough to include $\eta$), as if we set  $\pi_{n+1}$ to  $\eta_n$. In that case, we simply have $\mathbb{A}_n(\pi_{n+1}) = 0$, which means we can hope that the IL procedure (Eq.~\ref{eq:trust_region_1}) finds a policy $\pi_{n+1}$ that achieves $\mathbb{A}_{n}(\pi_{n+1}) < 0$ (i.e., local improvement over $\eta_n$). The question we want to answer is: by \emph{how much} is the performance of $\pi_{n+1}$ improved over $\pi_n$ by solving the two trust-region optimization procedures detailed in Eq.~\ref{eq:br_1_constraint} and Eq.~\ref{eq:trust_region_1}. Following Theorem 4.1 from \cite{kakade2002approximately}, we define $\varepsilon = \max_{s}|\mathbb{E}_{a\sim \pi_{n+1}(\cdot|s)}[A^{\eta_n}(s,a)]|$, which measures the maximum possible one-step improvement one can achieve from $\eta_n$. The following theorem states the performance improvement: %from $\pi_{n}$ to $\pi_{n+1}$:
\begin{theorem}
\label{them:improvement}
Solve Eq.~\ref{eq:br_1_constraint} to get $\eta_n$ and Eq.~\ref{eq:trust_region_1} to get $\pi_{n+1}$. The improvement of $\pi_{n+1}$ over $\pi_n$ is:
%\footnotesize
\begin{align} 
\label{eq:main_theorem}
&J(\pi_{n+1}) - J(\pi_n) \leq \frac{\beta\varepsilon}{(1-\gamma)^2} -  \frac{\left|\mathbb{A}_{n}(\pi_{n+1})\right|}{1-\gamma} - \Delta_n(\alpha).
%-\Delta_n(\alpha) - {|\mathbb{A}_{n}(\pi_{n+1})|} + \frac{2\beta\varepsilon\gamma}{(1-\gamma)^2}.
\end{align}
\end{theorem}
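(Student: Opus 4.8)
The plan is to apply the Performance Difference Lemma (Lemma~\ref{lemma:pdl}) twice, chaining $\pi_{n+1} \to \eta_n \to \pi_n$, and then control the error introduced by the change of state-visitation distribution between $\pi_{n+1}$ and $\pi_n$ via the two trust-region constraints. First I would write, using PDL with the pair $(\pi_{n+1}, \eta_n)$,
\begin{align}
J(\pi_{n+1}) - J(\eta_n) = \frac{1}{1-\gamma}\,\mathbb{E}_{(s,a)\sim d_{\pi_{n+1}}\pi_{n+1}}\!\left[A^{\eta_n}(s,a)\right]. \nonumber
\end{align}
The right-hand side is an expectation under $d_{\pi_{n+1}}$, but the quantity we control by solving Eq.~\ref{eq:trust_region_1} is $\mathbb{A}_n(\pi_{n+1})$, which is the same inner expectation but under $d_{\pi_n}$. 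So the first key step is to bound the discrepancy: write the RHS as $\frac{1}{1-\gamma}\mathbb{A}_n(\pi_{n+1}) + \frac{1}{1-\gamma}\big(\mathbb{E}_{d_{\pi_{n+1}}} - \mathbb{E}_{d_{\pi_n}}\big)\big[\mathbb{E}_{a\sim\pi_{n+1}}A^{\eta_n}(s,a)\big]$, and bound the second term in absolute value by $\frac{1}{1-\gamma}\,\varepsilon\,\|d_{\pi_{n+1}} - d_{\pi_n}\|_1$, using the definition $\varepsilon = \max_s|\mathbb{E}_{a\sim\pi_{n+1}(\cdot|s)}[A^{\eta_n}(s,a)]|$. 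The standard simulation-lemma-style bound gives $\|d_{\pi_{n+1}} - d_{\pi_n}\|_1 \leq \frac{2\gamma}{1-\gamma}\,\mathbb{E}_{s\sim d_{\pi_n}}[D_{TV}(\pi_{n+1}(\cdot|s),\pi_n(\cdot|s))] \leq \frac{2\gamma\beta}{1-\gamma}$, where the last inequality is the trust-region constraint Eq.~\ref{eq:trust_region_1}. Combining, $J(\pi_{n+1}) - J(\eta_n) \leq \frac{1}{1-\gamma}\mathbb{A}_n(\pi_{n+1}) + O\!\big(\frac{\gamma\beta\varepsilon}{(1-\gamma)^2}\big)$; since $\mathbb{A}_n(\pi_{n+1})$ is non-positive this is $-\frac{|\mathbb{A}_n(\pi_{n+1})|}{1-\gamma} + \frac{\beta\varepsilon}{(1-\gamma)^2}$ up to the $O(\cdot)$ constant.

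Next I would handle the $\eta_n$-to-$\pi_n$ step, which is immediate: by the definition of $\eta_n$ as the (constrained) optimal controller and the definition of $\Delta_n(\alpha)$ in the text, $J(\eta_n) - J(\pi_n) \leq -\Delta_n(\alpha)$, i.e. $\eta_n$ outperforms $\pi_n$ by at least $\Delta_n(\alpha)$. (If one wants to be careful that $\eta_n$ here is the output of the exact MBOC solver of Eq.~\ref{eq:br_1_constraint}, not the learned-model solver of Eq.~\ref{eq:local_optimal_oc}, then this is by definition of $\eta^*_n = \eta_n$; the learned-model degradation is the separate subject of Theorem~\ref{them:local_oc_theorem}.) Adding the two telescoped inequalities,
\begin{align}
J(\pi_{n+1}) - J(\pi_n) \leq \frac{\beta\varepsilon}{(1-\gamma)^2} - \frac{|\mathbb{A}_n(\pi_{n+1})|}{1-\gamma} - \Delta_n(\alpha), \nonumber
\end{align}
which is exactly Eq.~\ref{eq:main_theorem}.

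The main obstacle, and the only step needing real care, is the distribution-shift bound $\|d_{\pi_{n+1}} - d_{\pi_n}\|_1 \leq \frac{2\gamma}{1-\gamma}\mathbb{E}_{s\sim d_{\pi_n}}D_{TV}(\pi_{n+1},\pi_n)$: one must be cautious because the natural per-step coupling argument bounds the divergence of the time-$t$ marginals $d^t_{\pi_{n+1}}$ vs. $d^t_{\pi_n}$ in terms of the policy TV distance accumulated along trajectories of $\pi_{n+1}$ (or $\pi_n$), and it takes a short argument — telescoping over $t$ and using that $d^t$ is a mixture of one-step pushforwards — to replace the expectation over $d_{\pi_{n+1}}$ (or intermediate mixtures) by an expectation over $d_{\pi_n}$, which is what the constraint in Eq.~\ref{eq:trust_region_1} actually controls. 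Getting the $\gamma/(1-\gamma)$ factor and the constant $2$ right here is the crux; everything else is bookkeeping. I would either cite the standard lemma (e.g., from the TRPO/CPI line of work) or prove it inline by induction on the truncated horizon.
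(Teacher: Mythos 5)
Your proposal is correct and follows essentially the same route as the paper's own proof: apply the Performance Difference Lemma to the pair $(\pi_{n+1},\eta_n)$, swap $d_{\pi_{n+1}}$ for $d_{\pi_n}$ at a cost of $\varepsilon\|d_{\pi_{n+1}}-d_{\pi_n}\|_1$ controlled by the trust region (the paper's Lemma~\ref{lemma:state_dis_diff}), and then telescope through $J(\eta_n)$ using the definition of $\Delta_n(\alpha)$. The only differences are cosmetic: your simulation-lemma constant carries an extra $\gamma$ (a slightly tighter variant), and both your derivation and the paper's actually yield $2\beta\varepsilon/(1-\gamma)^2$ rather than the stated $\beta\varepsilon/(1-\gamma)^2$, a factor-of-two slip you absorb into the $O(\cdot)$ and the paper drops silently.
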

The proof of Theorem~\ref{them:improvement} is provided in Appendix~\ref{sec:proof_of_mi}.
  When $\beta$ is small, we are guaranteed to find a policy $\pi_{n+1}$ where the total cost decreases by $\Delta_n(\alpha) + |\mathbb{A}_n(\pi_{n+1})|/(1-\gamma)$ compared to $\pi_n$. Note that classic CPI's per iteration improvement  \cite{kakade2002approximately,schulman2015trust} only contains a term that has the similar meaning and magnitude of the second term in the RHS of Eq.~\ref{eq:main_theorem}. Hence DPI can improve the performance of CPI by introducing an extra term $\Delta_n(\alpha)$, and the improvement could be substantial when there exists a locally optimal policy $\eta_n$ that is much better than the current reactive policy $\pi_n$.  Such $\Delta(\alpha)$ comes from the explicit introduction of a model-based search into the training loop, which does not exist in classic APIs. %In practice, it is also reasonable to expect $\Delta(\alpha)$ is bigger than the policy improvement one would get from one iteration of CPI, as most of efficient existing MBOC solvers use second-order optimization techniques (as we will introduce in our practice algorithm below) to compute $\eta_n$, while CPI is a first-order optimization algorithm. 
From a practical point view, modern MBOCs are usually second-order methods, while APIs are usually first-order (e.g., REINFORCE and CPI). Hence it is reasonable to expect $\Delta(\alpha)$ itself will be larger than API's policy improvement per iteration. Connecting back to  ExIt and AlphaGo-Zero under model-based setting, $\Delta(\alpha)$ stands for the  improvement of the tree-based policy over the current deep net reactive policy. In ExIt and AlphaGo Zero, the tree-based policy $\eta_n$ performs fixed depth forward search followed by rolling out $\pi_n$ (i.e., bottom up by $V^{\pi_n}(s)$), which ensures the expert $\eta_n$ outperforms $\pi_n$. %The deeper the tree search is, the larger $\Delta(\alpha)$ will be.

When $|\Delta_n(\alpha)|$ and $|\mathbb{A}_n(\pi_{n+1})|$ are small, i.e., $|\Delta_n(\alpha)| \leq \xi$ and $|\mathbb{A}_n(\pi_{n+1})| \leq \xi$, then we can guarantee that $\eta_n$ and $\pi_{n}$ are good policies, \emph{under the stronger assumption that the initial distribution $\rho_0$ happens to be a good distribution (e.g., close to $d_{\pi^*}$), and the {realizable assumption}}: $\min_{\pi\in\Pi}\mathbb{E}_{s\sim d_{\pi_n}}\left[\mathbb{E}_{a\sim\pi(\cdot|s)}[A^{\eta_n}(s,a)]\right] =\mathbb{E}_{s\sim d_{\pi_n}}\left[\min_{a\sim\mathcal{A}}\left[A^{\eta_n}(s,a)\right]\right]$, holds. We show in Appendix~\ref{sec:global_performance_bound} that under the realizable assumption:
%\begin{align}
%\label{eq:rich_class_assumption}
%\end{align} 
%We then can relate the performance of $\eta_{n}$ to the optimal policy $\pi^*$. 
%We call a policy class $\Pi$ \emph{closed under its convex hull} if for any sequence of policies $\{\pi_i\}_i,\pi_i\in\Pi$, the convex combination $\sum_{i}w_i\pi_i$, for any $w$ such that $w_i\geq 0$ and $\sum_{i}w_i = 1$, also belongs to $\Pi$.
%\begin{theorem}
%\label{them:corollary_small_improvement}
%Under the realizable assumption and the assumption of $\Pi$ is closed under its convex hull, and $\max\{|\mathbb{A}_n(\pi_{n+1})|, \Delta(\alpha)\}\leq \xi\in\mathbb{R}^+$, then for $\eta_n$, we have:
\begin{align}
    &J(\eta_n) - J(\pi^*)   \leq \left(\max_{s}\left(\frac{d_{\pi^*}(s)}{\rho_0(s)} \right)\right)\left(\frac{\xi}{\beta(1-\gamma)^2} + \frac{\xi}{\beta(1-\gamma)} \right). \nonumber
    %\leq \left(   \max_{s}\left(\frac{d_{\pi^*}(s)}{d_{\pi_n}(s)}\right)\right) \delta.  \nonumber
    %\leq \left(   \max_{s}\left(\frac{d_{\pi^*}(s)}{\rho_0(s)}\right)\right) \delta.  \nonumber 
\end{align}
%\end{theorem}
%The proof %of the above corollary is similar to the proof of Theorem 6.2 from \cite{kakade2002approximately}, which 
%is provided in Appendix~\ref{sec:proof_of_corollary}. 
The term $\left(   \max_{s}\left({d_{\pi^*}(s)}/{\rho_0(s)}\right)\right)$ measures the distribution mismatch between the initial distribution $\rho_0$ and the optimal policy $\pi^*$, and appears in some previous API algorithms--CPI \cite{kakade2002approximately} and PSDP \cite{bagnell2004policy}. A $\rho_0$ that is closer to $d_{\pi^*}$ (e.g., let experts reset the agent's initial position if possible) ensures better global performance guarantee. CPI considers a setting where a good reset distribution $\nu$ (different from $\rho_0$) is available, DPI can leverage such reset distribution by replacing $\rho_0$ by $\nu$ at training. %In general however such good initial reset distribution may not be available. 

In summary, we can expect larger per-iteration policy improvement from DPI compared to CPI (and TRPO which has similar per iteration policy improvement as CPI), thanks to the introduction of local model-based search. The final performance bound of the learned policy is in par with CPI and PSDP. %but further improvement of this bound in general (e.g., without good initial reset distribution $\rho_0$) for large scale model-free RL is challenging \cite{dann2018polynomial} and beyond the scope of this work. 

\vspace{-1mm}
\section{An Instance of  DPI}
\vspace{-1mm}
In this section, we dive into the details of each update step of DPI and suggest one practical instance of DPI, which can be used in continuous control settings. 
We denote $T$ as the maximum possible horizon.\footnote{Note $T$ is the maximum possible horizon which could be long. Hence, we still want to output a single policy, especially when the policy is parameterized by complicated non-linear function approximators like deep nets.} We denote the state space $\mathcal{S}\subseteq \mathbb{R}^{d_s}$ and  action space $\mathcal{A}\subseteq\mathbb{R}^{d_a}$. We work on parameterized policies: we parameterize policy $\pi$ as $\pi(\cdot|s;\theta)$ for any $s\in\mathcal{S}$ (e.g., a neural network with parameter $\theta$), and parameterize $\eta$ by a sequence of time-varying linear-Gaussian policies $\eta = \{\eta_t\}_{1\leq t\leq T}$, where $\eta_t(\cdot|s) = \mathcal{N}(K_t s+k_t, P_t)$ with control gain $K_t\in\mathbb{R}^{d_a\times d_s}$, bias term $k_t\in\mathbb{R}^{d_a}$ and Covariance $P_t\in\mathbb{R}^{d_a\times d_a}$.%as the parameters of $\eta_t$. 
We will use $\Theta = \{K_t,k_t, P_t\}_{0\leq t\leq H}$ to represent the collection of the parameters of all the linear-Gaussian policies across the entire horizon. One approximation we make here is to replace the policy divergence measure $D_{TV}(\pi_n,\pi)$ (note total variation distance is symmetric) with the KL-divergence $D_{KL}(\pi_n,\pi)$, which allows us to leverage Natural Gradient \cite{kakade2002natural,bagnell2003covariant}.\footnote{Small $D_{KL}$ leads to small $D_{TV}$, as by Pinsker's inequality, $D_{KL}(q,p)$ (and $D_{KL}(p,q)$) $\geq D_{TV}(p,q)^2$. } To summarize, $\pi_n$ and $\eta_{n}$ are short for $\pi_{\theta_n}$ and $\eta_{\Theta_n}= \{\mathcal{N}(K_t s+k_t,P_t)\}_{t}$, respectively.
Below we first describe how to compute $\eta_{\Theta_n}$ given $\pi_n$ (Sec.~\ref{sec:practical_expert}), and then describe how to update $\pi$ via imitating $\eta_{\Theta_n}$ using Natural Gradient (Sec.~\ref{sec:practical_ng}). 

%As we are interested in continuous state and action space, we relax policy divergence measure in the trust region from $\max_{s}D_{TV}(\pi,\pi_n)$ to $\mathbb{E}_{s\sim d_{\pi_n}}D_{KL}(\pi,\pi_n)\leq \beta$, where we replace $\max_s$ by the expectation under the state visitation of the current policy $\pi_n$, and replace by the total variation distance by KL divergence.

%\vspace{-1mm}
\subsection{Updating $\eta_{\Theta}$ with MBOC using Learned Time Varying Linear Models}
\label{sec:practical_expert}
%\vspace{-1mm}

\begin{wrapfigure}{H}{0.5\textwidth}
\begin{minipage}{0.5\textwidth}
\begin{algorithm}[H]%[t!]
 \caption{\textsc{AggreVaTeD-GPS}}
 \begin{algorithmic}[1]
 \label{alg:tppi_alg}
    \STATE \textbf{Input:} Parameters $\alpha\in\mathbb{R}^+$, $\beta\in\mathbb{R}^+$. 
    \STATE Initialized $\pi_{\theta_0}$
    %\STATE Initialize $\pi_{\theta_0}$
    \FOR{n = 0 to ...}
        \STATE Execute $\pi_{\theta_n}$ to generate a set of trajectories\STATE Fit local linear dynamics $\hat{P}$ (Eq.~\ref{eq:fit_linear_model}) using $\{s_t,a_t,s_{t+1}\}$ collected from step 1 
        \STATE Solve the minmax in Eq.~\ref{eq:primal_dual} subject to $\hat{P}$  to obtain $\eta_{\Theta_n}$ and form disadvantage  $A^{\eta_{\Theta_n}}$
        %\STATE Call OC solver to optimize Eq.~\ref{eq:new_cost} subject to the learned dynamics $\hat{P}$ and obtain $\pi'_{\Theta_n}$.
       % \STATE Form disadvantage  $A^{\eta_{\Theta_n}}$ %{\;\;\;\; //P2's Update}
        \STATE Compute $\theta_{n+1}$ by   %$k$-steps of NGD 
        NGD (Eq.~\ref{eq:ngd}) %{\;\;\;\;\;\//P1's Update}
    \ENDFOR
 \end{algorithmic}
\end{algorithm}
\end{minipage}
\end{wrapfigure}

%As we explained, the trust-region optimization in Eq.~\ref{eq:PI} is similar to classic API with the only difference being that we are performing policy improvement against a policy $\pi'_n$ that is even better than the current policy $\pi_n$, under the state visitation $d_{\pi_n}$ of $\pi_n$. Hence we can reduce Eq.~\ref{eq:PI} to a constrained classification problem (assume we have discrete actions), which can be solved by techniques such as Conservative Policy Iteration (CPI) \cite{kakade2002approximately}. On the other hand, the trust-region problem in Eq.~\ref{eq:tr_best_response} is nontrivial, as it is essentially solving the following constrained optimization problem:
%\begin{align}
%\label{eq:br}
%    \min_{\pi}J(\pi), s.t., \max_{s}D_{TV}(\pi(\cdot|s), \pi_n(\cdot|s)) \leq \alpha.
%\end{align}   The optimization problem in Eq.~\ref{eq:br} focuses on finding the local optimal policy around $\pi_n$ and hence is much easier than finding the \emph{global} optimal policy $\pi^*$. However, due to the non-convexity of the objective function and unknown transition kernel $P$, it is still not straightforward to compute the optimal solution in the trust-region. We propose to use \emph{Model-based optimal control (MBOC)} to approximately solve Eq.~\ref{eq:br}.

%Under the trust-region constraint shown in Eq.~\ref{eq:trust_region}, using the $\alpha$-coupling techinique \citep{schulman2015trust}, we can show that the state distributions $d_{\pi'}$ and $d_{\pi}$ is close:  
%\begin{align}
%\|d_{\pi'} - d_{\pi}\|_1 \leq \frac{2\alpha}{1-\gamma}.
%\end{align}
 We explain here how to find $\eta_n$ given $\pi_n$ using MBOC.  In our implementation, we use Linear Quadratic Gaussian (LQG) optimal control \cite{kwakernaak1972linear} as the black-box optimal control solver. We learn a sequence of time varying linear Gaussian transition models to represent $\hat{P}$: $\forall t\in [1,T]$,
\begin{align}
\label{eq:fit_linear_model}
    s_{t+1}\sim \mathcal{N}(A_ts_t + B_t a_t + c_t, \Sigma_t),
\end{align} where %$A_t\in\mathbb{R}^{d_s\times d_s},B_t\in\mathbb{R}^{d_s\times d_a}, c_t\in\mathbb{R}^{d_s}, \Sigma_t\in\mathbb{R}^{d_s\times d_s}$ 
$A_t,B_t,c_t,\Sigma_t$ can be learned using classic linear regression techniques on a dataset $\{s_t,a_t, s_{t+1}\}$ collected from executing $\pi_n$ on the real system. Although the dynamics $P(s,a)$ may be complicated over the entire space, 
linear dynamics could locally approximate the dynamics well (after all, our theorem only requires $\hat{P}$ to have low predictive error under $d_{\pi_n}\pi_n$).
%A sequence of linear functions could locally approximate it well (after all, our theorem only requires a locally accurate model $\hat{P}$ under distribution $d_{\pi_n}\pi_n$). %The second approximation we use is again to replace the policy divergence from the total variation distance $\mathbb{E}_{s\sim d_{\pi_n}}D_{TV}(\pi,\pi_n)$ to KL divergence $\mathbb{E}_{s\sim d_{\pi}}[D_{KL}(\pi,\pi_n)]$.

Next, to find a locally optimal policy under linear-Gaussian transitions (i.e., Eq.~\ref{eq:local_optimal_oc}), we add the KL constraint to the objective with Lagrange multiplier $\mu$ and form an equivalent min-max problem:
\begin{align}
\label{eq:primal_dual}
    &\min_{\eta}\max_{\mu\geq 0}\mathbb{E}\left [\sum_{t=1}^T \gamma^{t-1}c(s_t,a_t)\right] + \mu\Big(\sum_{t=1}^T \gamma^{t-1}\mathbb{E}_{s\sim d_{\eta}^t}[D_{KL}(\eta,\pi_n)] - \alpha\Big), 
\end{align} where $\mu$ is the Lagrange multiplier, which can be solved by alternatively updating $\eta$ and $\mu$ \cite{levine2014learning}. For a fixed $\mu$,  using the derivation from \cite{levine2014learning}, ignoring terms that do not depend on $\eta$, Eq.~\ref{eq:primal_dual} can be written:
\begin{align}
\label{eq:new_cost}
\arg\min_{\eta} \mathbb{E}\left[ \sum_{t=1}^T \gamma^{t-1}(c(s_t,a_t)/\mu- \log\pi_n(a_t|s_t)) \right]  - \sum_{t=1}^T\gamma^{t-1}\mathbb{E}_{s\sim d_{\eta}^t}[\mathcal{H}(\eta(\cdot|s))],
\end{align} where $\mathcal{H}(\pi(\cdot|s)) = \sum_a \pi(a|s)\ln(\pi(a|s))$ is the negative entropy. Hence the above formulation can be understood as using a \emph{new cost function}:
%\begin{align}
%\label{eq:new_cost}
$c'(s_t,a_t) = c(s_t,a_t)/\mu - \log(\pi_n(a_t|s_t))$,
%\end{align} 
and an entropy regularization on $\pi$. It is well known in the optimal control literature that when $c'$ is quadratic and dynamics are linear, the optimal sequence of linear Gaussian policies for the objective in Eq.~\ref{eq:new_cost} can be found exactly by a Dynamic Programming (DP) based approach, the \emph{Linear Quadratic Regulator} (LQR) \cite{kwakernaak1972linear}.  Given a dataset  $\{(s_t,a_t), c'(s_t,a_t)\}$ collected while executing $\pi_n$, we can fit a quadratic approximation of $c'(s,a)$ \cite{ziebart2010modeling,levine2014learning}. 
With a quadratic approximation of $c'$ and linear dynamics, we solve Eq.~\ref{eq:new_cost} for $\eta$ exactly by LQR \cite{ziebart2010modeling}. Once we get $\eta$, we go back to Eq.~\ref{eq:primal_dual} and update the Lagrange multiplier $\mu$, for example, by projected gradient ascent \cite{Zinkevich2003_ICML}. 
%\todo{Don't you need to reference computing the equivalent of LQR with entropy? Maybe reference Ziebart MaxEnt LQR?}
%by dual gradient update \cite{levine2014learning}: we fix $\mu$, and run 
Upon convergence, LQR gives us a sequence of time-dependent linear Gaussian policies %$\eta_{\Theta_n} = \{ \mathcal{N}(K_ts_t + k_t, P_t); t\in[T]\}$, 
together with a sequence of analytic quadratic cost-to-go functions $Q_t(s,a)$, %where $Q_t$ is in the format of $ [s,a]^TF_t[s,a]/2 + q_t^T[s,a] + v_t$.\footnote{Note $[s,a]$ stands for the vector concatenating $s$ and $a$, $F_t\in\mathbb{R}^{(d_s+d_a)\times(d_s+d_a)}$, $q_t\in\mathbb{R}^{d_s+d_a}, v_t\in\mathbb{R}$.} 
and quadratic disadvantage functions $A_t^{\eta_{\Theta_n}}(s,a)$, for all $t\in [T]$. %which is in quadratic form as well.

%with $\eta_{\Theta_n}$, we can update the Lagrangian multiplier $\mu$ using bisection technique based on whether or not $\pi'_{\Theta_n}$ satisfies the KL constraint. 

%The nice property of LQG is that we get a closed-form quadratic cost-to-go function $\{Q_t(s,a) = [s_t,a_t]^TQ_t[s_t,a_t]/2 + q_t^T[s_t,a_t] + v_t;  \forall t\}$, where we use the notation $[s,a]$ to represent the vector concatenating $s$ and $a$. We use the cost-to-go to form the disadvantage function $A^{\pi'_{\Theta_n}}(s,a)$, which is in quadratic form as well.

%\vspace{-1mm}
\subsection{Updating $\pi_{\theta}$ via imitating $\eta_{\Theta}$ using Natural Gradient}
%\vspace{-1mm}
\label{sec:practical_ng}
%\label{sec:practical_br}

%In policy space, the objective function shown in Eq.~\ref{eq:PI} is linear with respect to the policy to optimize $\pi$. However, the objective can easily become nonconvex with respect to parameters $\theta$ of the policy $\pi$. 
Performing a second order Taylor expansion of the KL constraint $\mathbb{E}_{s\sim d_{\pi_{n}}}[D_{KL}(\pi_n(\cdot|a),\pi(\cdot|s;\theta)))]$ around $\theta_n$ \cite{kakade2002natural,bagnell2003covariant}, we get the following constrained optimization problem: %\todo{Seems this should reference the classic Bagnell&Schneider NG paper. ;)}
\begin{align}
    &\min_{\theta}\mathbb{E}_{s\sim d_{\pi_{\theta_n}}}[\mathbb{E}_{a\sim \pi(\cdot|s;\theta)}[A^{\eta_{\Theta_n}}(s,a)]],  s.t., (\theta - \theta_n)^TF_{\theta_n}(\theta-\theta_n)\leq \beta, \label{eq:approx_KL_constraint}
\end{align} where $F_{\theta_n}$ is  the Hessian of the KL constraint $\mathbb{E}_{s\sim d_{\pi_{\theta_n}}}D_{KL}(\pi_{\theta_n},\pi_{\theta})$ (i.e., Fisher information matrix), measured at $\theta_n$. Denote the objective %$\mathbb{E}_{s\sim d_{\pi_{\theta_n}}}[\mathbb{E}_{a\sim \pi(\cdot|s;\theta)}[A^{\eta_{\Theta_n}}(s,a)]]$ 
(i.e., the first term in Eq.~\ref{eq:approx_KL_constraint}) as $L_n(\theta)$, and denote $\nabla_{\theta_n}$ as $\nabla_{\theta}L_n(\theta)|_{\theta=\theta_n}$,  we can optimize $\theta$ by performing natural gradient descent (NGD): 
\begin{align}
\label{eq:ngd}
    \theta_{n+1} = \theta_n - \mu F_{\theta_n}^{-1} \nabla_{\theta_n}, \text{where } \mu = \sqrt{\beta/(\nabla_{\theta_n}^T F_{\theta_n}^{-1}\nabla_{\theta_n})}.
\end{align} The specific $\mu$ above ensures the KL constraint is satisfied.  More details about the imitation update on $\pi$ can be found in Appendix~\ref{sec:detailed_ng}.

%In our implementation, we use Conjugate Gradient with the Hessian-vector product trick~\cite{schulman2015trust} to directly compute $F^{-1}\nabla$.
%Note that the unbiased empirical estimation of $\nabla_{\theta_n}$ and $F_{\theta_{n}}$ is well-studied and can be computed using samples generated from executing $\pi_{\theta_n}$. %Assume we roll out $\pi_{\theta_n}$ to generate $K$ trajectories $\tau^i = \{s^i_0,a^i_0,...s^i_T,a^i_T\}, \forall i \in [K]$. The empirical gradient and Fisher matrix can be formed using these samples as $\nabla_{\theta_n} = \sum_{s,a}\left[ \nabla_{\theta_n}\left(\ln(\pi(a|s;\theta_n))\right)A^{\eta_{\Theta_n}}(s,a)\right]$ and $ F_{\theta_n}  = \sum_{s,a}\left[(\nabla\ln(\pi(a|s;\theta_n)))(\nabla_{\theta_n}\ln(\pi(a|s;\theta_n))^T\right]$. %Due to space limit, for empirical estimation of the Fisher information matrix and how to efficient compute the descent direction $F^{-1}_{\theta}\nabla_{\theta}$ using Hessian-vector product techniques without inverting the Fisher matrix, we refer readers to \citet{schulman2015trust} for great details. 

%For simplicity we omit it here.
 %with the %learning rate properly scaled to $\sqrt{(\beta/k)/(\nabla_{\theta_n}^T F_{\theta_n}^{-1}\nabla_{\theta_n})}$ to ensure the finally solution still satisfies the KL constraint.

\paragraph{Summary} If we consider  $\eta$ as an expert, NGD is similar to natural gradient \textsc{AggreVaTeD}---a differential IL approach \cite{sun2017deeply}. We summarizes the procedures presented in Sec.~\ref{sec:updating_expert}\&\ref{sec:practical_ng} in Alg.~\ref{alg:tppi_alg}, which we name as \textsc{AggreVaTeD-GPS}, stands for the fact that we are using MBOC to Guide Policy Search \cite{levine2014learning,montgomery2016guided} via \textsc{AggreVaTeD}-type update. Every iteration, we run $\pi_{\theta_n}$ on $P$ to gather samples. We estimate   time dependent local linear dynamics $\hat{P}$ and then leverage an OC solver (e.g, LQR) to solve the Lagrangian in Eq.~\ref{eq:primal_dual} to compute $\eta_{\Theta_n}$ and $A^{\eta_{\Theta_n}}$. %subject to the learned dynamics $\hat{P}$. 
We then perform NGD to update to $\pi_{n+1}$.% We summarize the procedure,  \textsc{AggreVaTeD-OC}, in Alg.~\ref{alg:tppi_alg}. %We use $k=5$ across all the experiments. 

%\wen{edit paragraph below: I did. See all todos.}
\subsection{Additional Related Works}

The most closely related work with respect to Alg.~\ref{alg:tppi_alg} is Guided Policy Search (GPS) for unknown dynamics \cite{levine2014learning} and its variants (e.g.,\cite{levine2016end,montgomery2016guided,montgomery2017reset}). %Work from \cite{atkeson1994using,atkeson2003nonparametric} and 
GPS (including its variants) demonstrates model-based optimal control approaches can be used to speed up training policies parameterized by rich non-linear function approximators (e.g., deep networks) in large-scale applications. While Alg.~\ref{alg:tppi_alg} in high level follows GPS's iterative procedure of alternating reactive policy improvement and MBOC,  the main difference between Alg.~\ref{alg:tppi_alg} and GPSs are the update procedure of the reactive policy. Classic GPS, including the mirror descent version, phrases the update procedure of the reactive policy as a \textit{behavior cloning procedure}, i.e., given an expert policy $\eta$, we perform $\min_{\pi} D_{KL}(d_{\mu}\mu || d_{\pi}\pi)$ %\todo{Note this phrasing is sort of common to our lab and a few others, but I'm not sure it's totally generic. It at leas should cite something that a reader could look to understand the difference}, \textit{i.e.}, given the policy $\eta_{\Theta}$ from MBOC, it updates the policy $\pi$ to match to $\eta$, by minimizing the KL divergence between the state-action distributions resulting from $\eta$ and $\pi$, i.e., $\min_{\pi} D_{KL} (d_{\eta}\eta || d_{\pi}\pi)$ 
\footnote{See Line 3 in Alg.2 in \cite{montgomery2016guided}, where in principle a behavior cloning  on $\pi$ uses samples from expert $\eta$ (i.e., off-policy samples). We note, however, in actual implementation some variants of GPS tend to swap the order of $\pi$ and $\eta$ inside the KL, often resulting a on-policy sampling strategy (e.g.,\cite{montgomery2017reset}). We also note a Mirror Descent interpretation and analysis to explain GPS's convergence \cite{montgomery2016guided} implies the  correct way to perform a projection is to minimize the reverse KL,\textit{ i.e.}, $\arg\min_{\pi\in\Pi} D_{KL}( d_{\pi}\pi || d_{\eta}\eta)$. This in turn matches the DPI intuition: one should attempt to find a policy $\pi$ that is similar to $\eta$ under the state distribution of $\pi$ itself. %\todo{Last sentence is unparseable.} 
}. Note that our approach to updating $\pi$ is fundamentally on-policy,\textit{ i.e.,} we generate samples from $\pi$. Moreover, we update $\pi$ by performing policy iteration against $\eta$, i.e., $\pi$ approximately acts greedily with respect to $A^{\eta}$, which resulting a key difference: if we limit the power of MBOC, \textit{i.e.}, set the trust region size in MBOC step to zero in both DPI and GPS, then our approach reduces to CPI and thus improves  $\pi$ to local optimality. GPS and its variants, by contrast, have no ability to improve the reactive policy in that setting. 

\vspace{-2mm}
\section{Experiments}
\vspace{-1mm}
\begin{figure*}[t!]
	\centering
	\begin{subfigure}[l]{0.245\textwidth}
        \includegraphics[width=1.1\textwidth,keepaspectratio]{./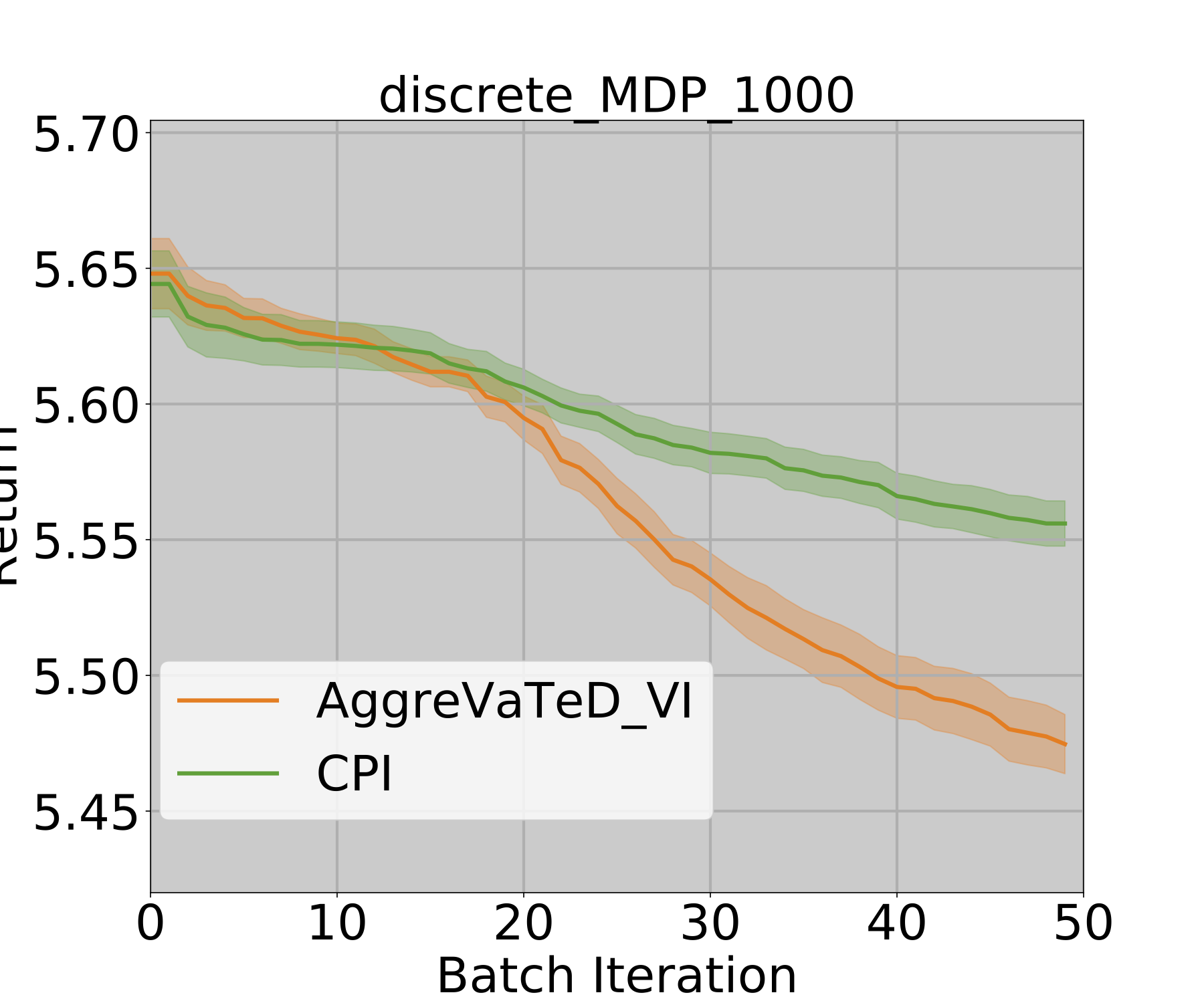}
        \caption{Discrete MDP}
        \label{fig:discrete_MDP}
    \end{subfigure}
	\begin{subfigure}[l]{0.245\textwidth}
        \includegraphics[width=1.1\textwidth,keepaspectratio]{./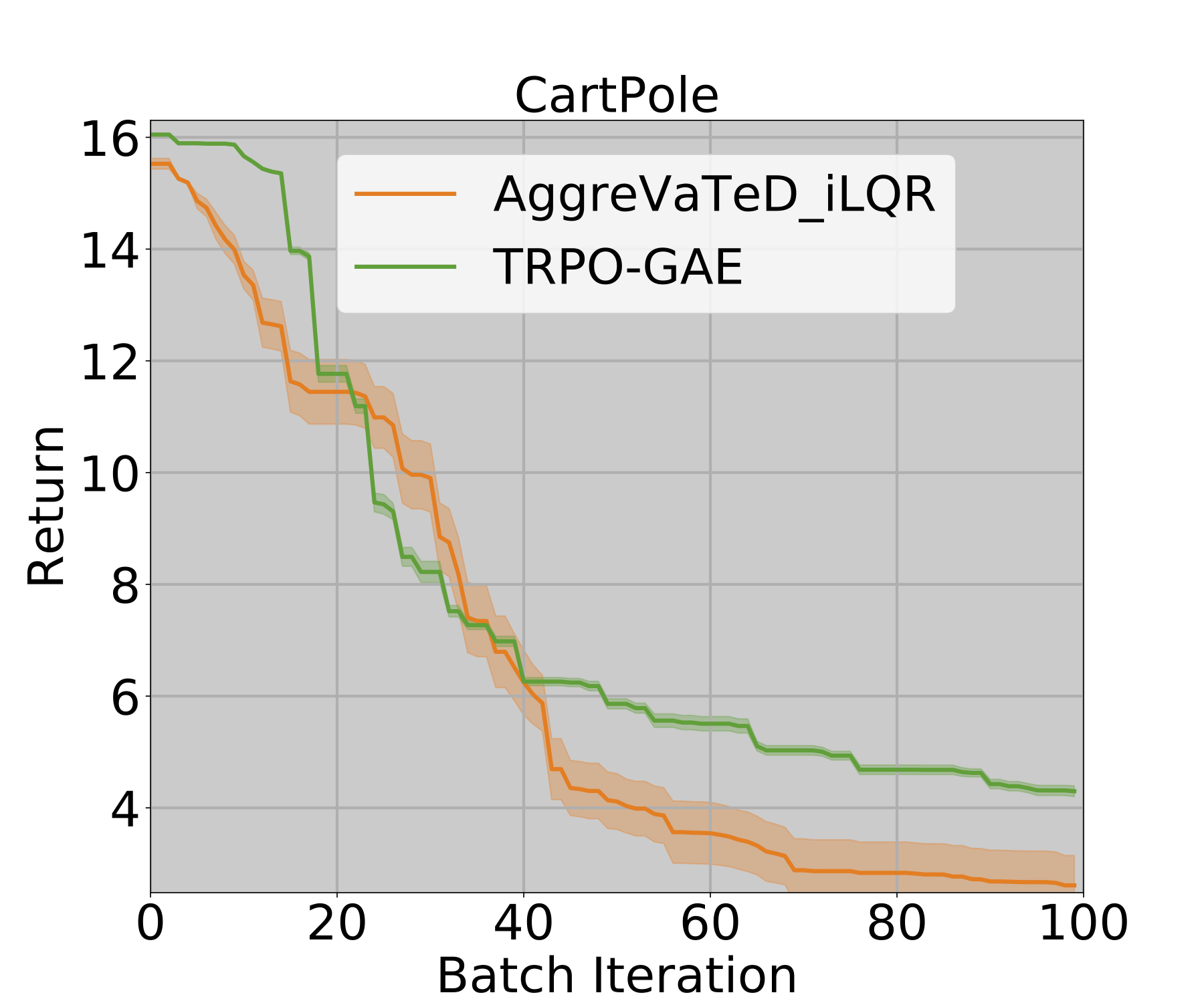}
        \caption{Cart-Pole}
        \label{fig:cartpole}
    \end{subfigure}
    %\end{subfigure}
    \begin{subfigure}[l]{0.245\textwidth}
        \includegraphics[width=1.09\textwidth,keepaspectratio]{./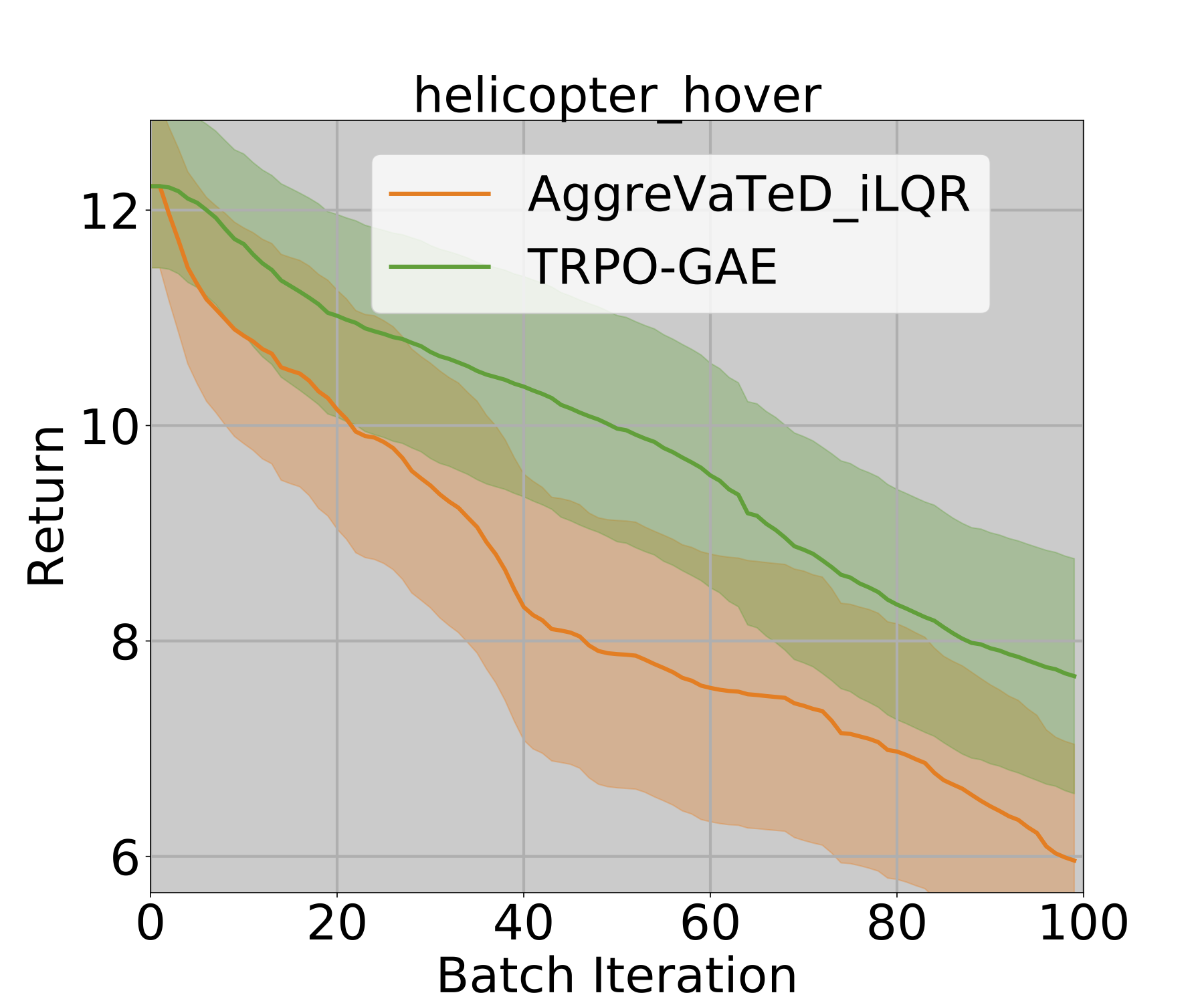}
        \caption{Helicopter Hover}
        \label{fig:hover}
    \end{subfigure}
    \begin{subfigure}[l]{0.245\textwidth}
        \includegraphics[width=1.1\textwidth,keepaspectratio]{./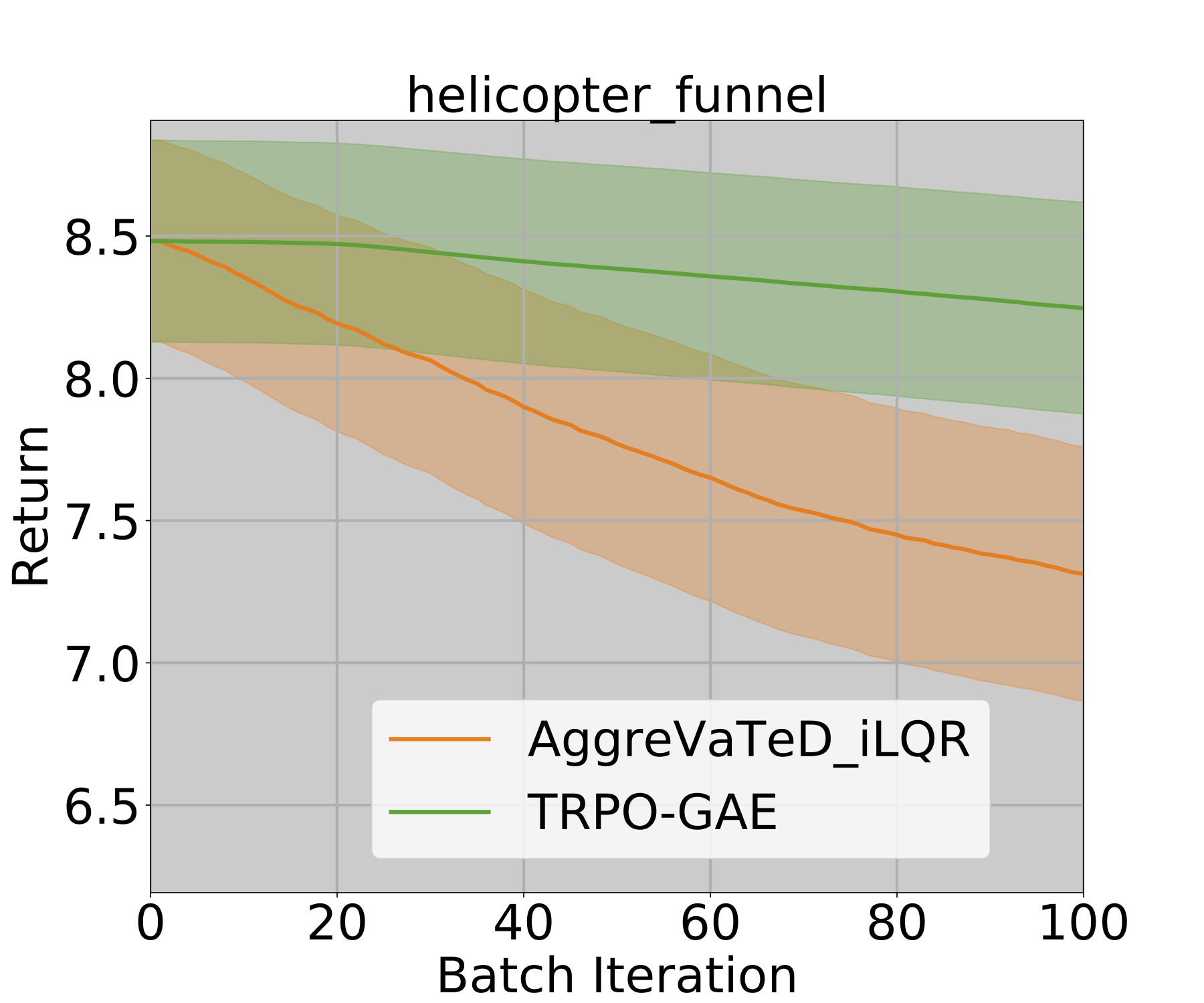}
        \caption{Helicopter Funnel}
        \label{fig:funnel}
    \end{subfigure}
    %\end{subfigure}
	\begin{subfigure}[l]{0.245\textwidth}
        \includegraphics[width=1.1\textwidth,keepaspectratio]{./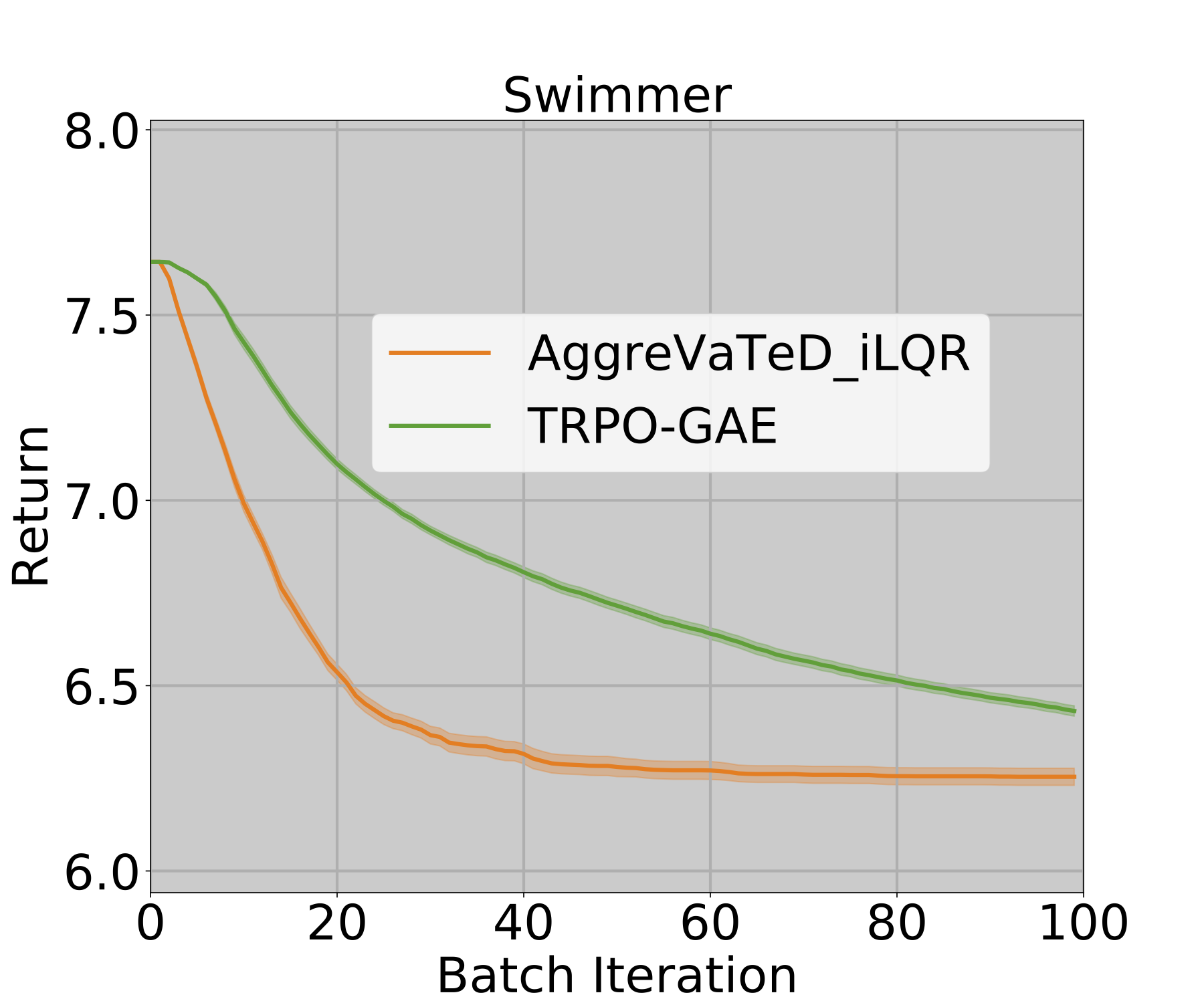}
        \caption{Swimmer}
        \label{fig:swimmer}
    \end{subfigure}
    \begin{subfigure}[l]{0.245\textwidth}
        \includegraphics[width=1.1\textwidth,keepaspectratio]{./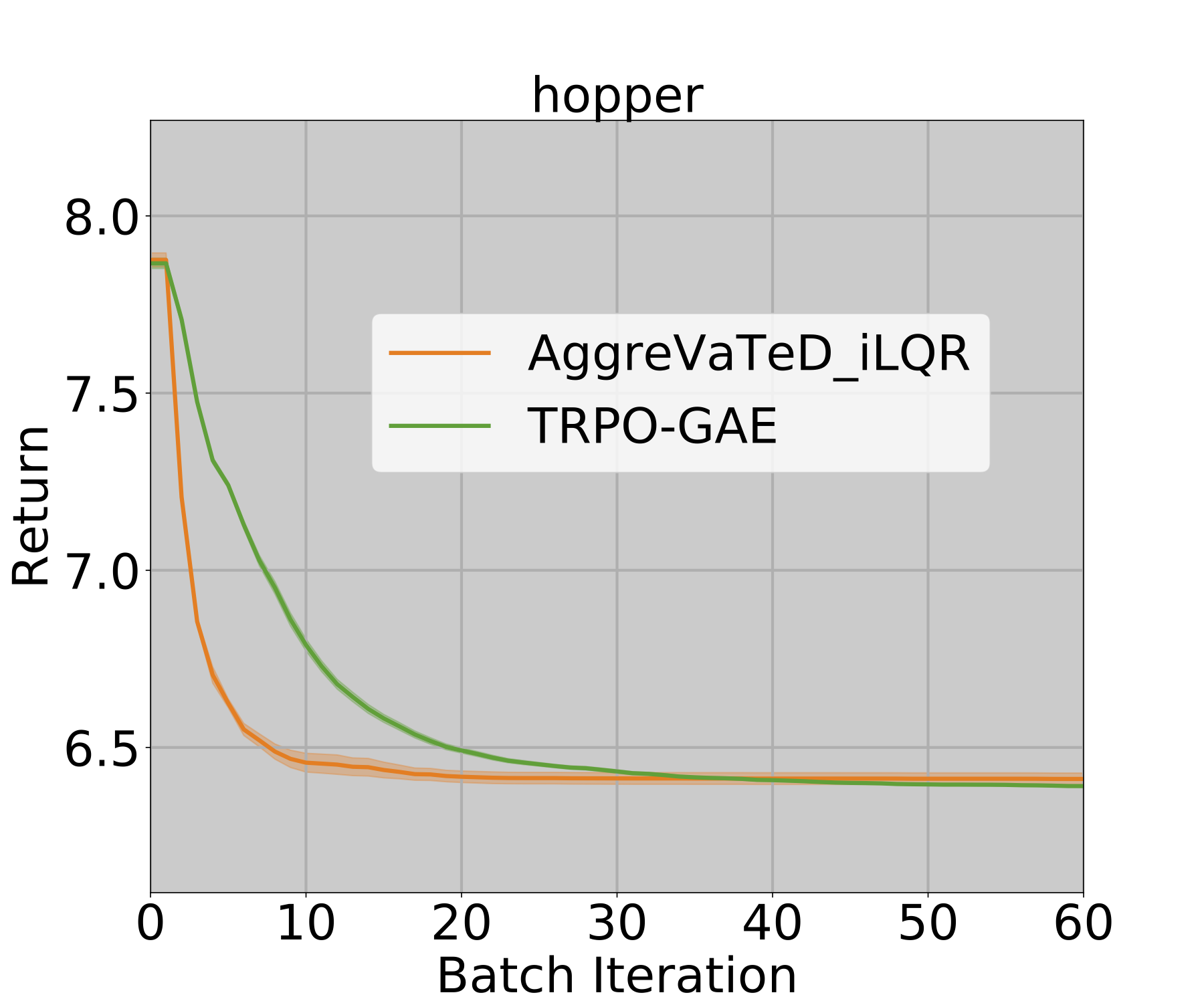}
        \caption{Hopper}
        \label{fig:hopper}
    \end{subfigure}
    \begin{subfigure}[l]{0.245\textwidth}
        \includegraphics[width=1.1\textwidth,keepaspectratio]{./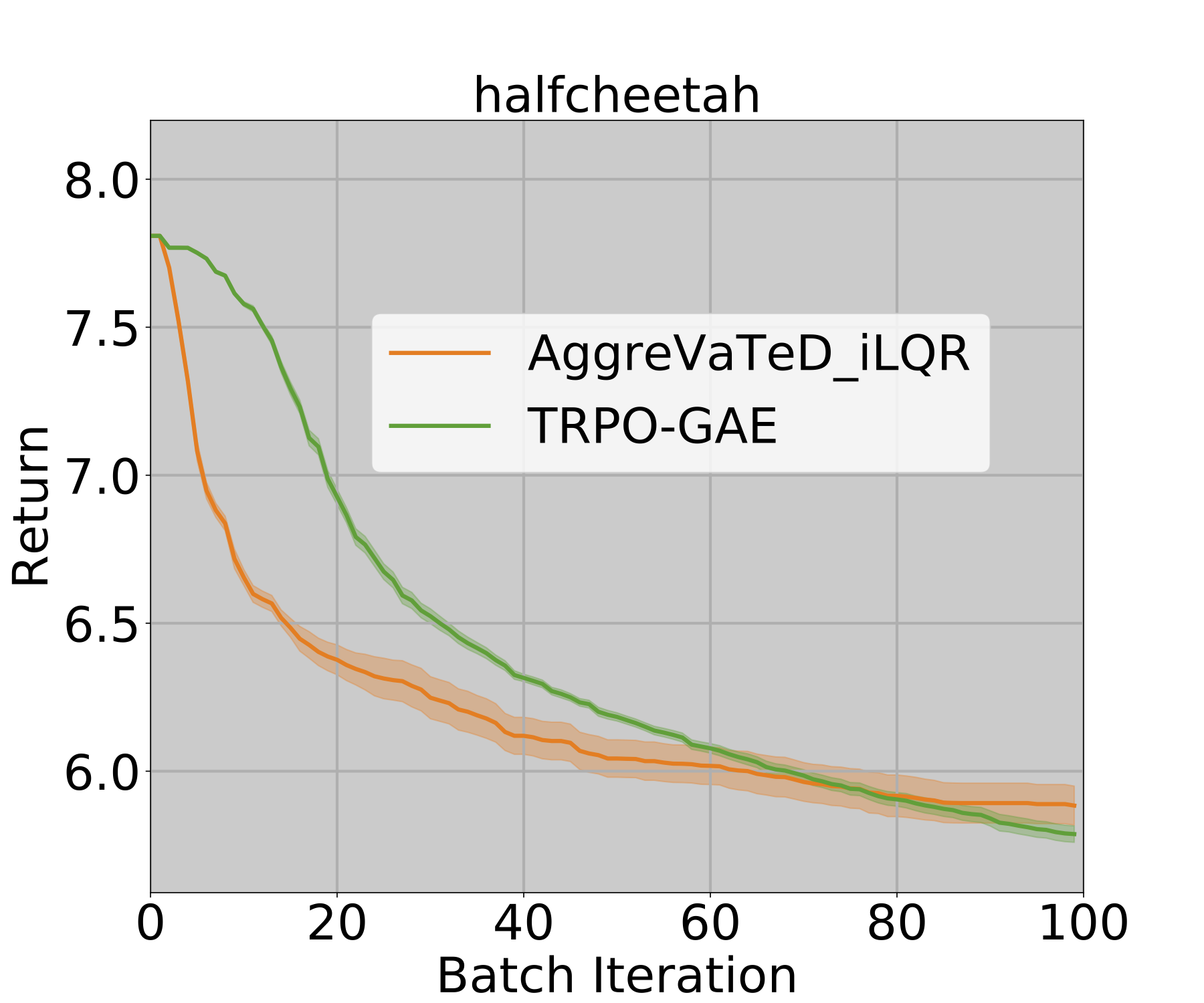}
        \caption{Half-Cheetah}
        \label{fig:halfcheetah}
    \end{subfigure}
    \caption{Performance (mean and standard error of cumulative cost in log$_2$-scale on y-axis) versus number of episodes ($n$ on x-axis).  }
    \label{fig:perf_robotics}
    \vspace{-10pt}
\end{figure*}

We tested our approach on several MDPs: (1) a set of random discrete MDPs (Garnet problems \cite{scherrer2014approximate})  (2) Cartpole balancing \cite{sutton1998introduction}, (3) Helicopter Aerobatics (Hover and Funnel) \cite{Abbeel2005}, (4) Swimmer, Hopper and Half-Cheetah from the MuJoCo physics simulator \cite{todorov2012mujoco}. The goals of these experiments are: \textbf{(a)} to experimentally verify that using $A^{\eta}$ from the intermediate expert $\eta$ computed by model-based search to perform policy improvement is more sample-efficient than using $A^{\pi}$. %Although previous work, such as ExIt, has compared against REINFORCE \cite{williams1992simple} and experimentally provided an affirmative answer to this question,  we would like to show the same phenomenon with $\eta$ computed from MBOC using learned local models;
\textbf{(b)} to show  that our approach can be applied to robust policy search and can outperform existing approaches~\cite{atkeson2012efficient}. %\footnote{Link to our implementation will be provide here.}

%where we have a set of training environments that are similar, but different from the testing environment.  The goal of robust control is to use the available training environments to learn a policy that can perform well on a never-seen test environment.  

\vspace{-1mm}
\subsection{Comparison to CPI on Discrete MDPs}\vspace{-1mm}
Following \cite{scherrer2014approximate}, we randomly create ten discrete MDPs with 1000 states and 5 actions. Different from the techniques we introduced in Sec.~\ref{sec:practical_ng} for continuous settings, here we use the conservative update shown in Eq.~\ref{eq:conservative_update} to update the reactive policy, where each $\pi_n^*$ is a linear classifier and is trained using regression-based cost-sensitive classification on samples from $d_{\pi_n}$ \cite{kakade2002approximately}. The feature for each state $\phi(s)$ is a binary encoding of the state ($\phi(s)\in \mathbb{R}^{\log_2(|\mathcal{S}|)}$). 
We maintain the estimated transition $\hat{P}$ in a tabular representation. The policy $\eta$ is also in a tabular representation (hence expert $\eta$ and reactive policy $\pi$ have different feature representation) and is computed using exact 
VI under $\hat{P}$ and $c'(s,a)$ (hence we name our approach here as \textsc{AggreVaTeD-VI}). The setup and the conservative update implementation is detailed in Appendix~\ref{sec:discrete_MDP_details}. %After training, we just store and output the reactive policy $\pi_n$ for testing, and $\pi_n$ sits in low dimensional feature space ($\log(|\mathcal{S}|)$).
Fig.~\ref{fig:discrete_MDP} reports the statistical performance of our approach and CPI over the 10 random discrete MDPs. Note that our approach is more sample-efficient than CPI, although we observed it is slower than CPI per iteration as we ran VI using learned model. We tune $\beta$ and neither CPI nor our approach uses line search on $\beta$. 
The major difference between \textsc{AggreVaTeD-VI} and CPI here is that we used $A^{\eta}$ instead of $A^{\pi}$ to update $\pi$.
%\footnote{For CPI, we estimated $A^{\pi}$ by running Bellman backup on $\hat{P}$ using $\pi$. We also tested using rollouts to compute unbiased estimate $A^{\pi}$, but it did not perform well due to high variance.} 
%The results indicate the benefit of imitating $\eta$. %from model-based search. %that performing policy iteration against a better policy speeds up the learning process.

\subsection{Comparison to Actor-Critic in Continuous Settings}
We compare against TRPO-GAE \cite{schulman2015high} on a set of continuous control tasks. The setup is detailed in Appendix~\ref{sec:continuous_setup}. TRPO-GAE is a actor-critic-like approach where both actor and critic are updated using trust region optimization. We use a two-layer neural network to represent policy $\pi$ which is updated by natural gradient descent. We use LQR as the underlying MBOC solver and we name our approach as \textsc{AggreVaTeD-iLQR}. 
%the actor is updated using trust region optimization \cite{schulman2015trust}, and the critic is learned through the technique of Generalized Advantage Estimation \cite{schulman2015high}.
Fig.~\ref{fig:perf_robotics} (b-g) shows the comparison between our method and TRPO-GAE over a set of continuous control tasks (confidence interval is computed from 20 random trials). As we can see, our method is significantly more sample-efficient than TRPO-GAE albeit slower per iteration as we perform MBOC. The major difference between our approach and TRPO-GAE is that we use $A^{\eta}$ while TRPO-GAE uses $A^{\pi}$ for the policy update. Note that both $A^{\eta}$ and $A^{\pi}$ are computed using the rollouts from $\pi$. The difference is that our approach uses rollouts to learn local dynamics and  analytically estimates $A^{\eta}$ using MBOC, while TRPO-GAE learns $A^{\pi}$ using rollouts. Overall, our approach converges  faster than TRPO-GAE (\textit{i.e.}, uses less samples), which again indicates the benefit of using  $A^{\eta}$ in policy improvement.

%which is computed analytically from model-based OC with the local models learned from rollouts, while TRPO-GAE uses $A^{\pi}$ which is directly learned using rollouts.  

\vspace{-2mm}
\subsection{Application on Robust Policy Optimization}
\vspace{-1mm}

%We set up the robust policy optimization experiment as follows. 
One application for our approach is robust policy optimization \cite{zhou1996robust}, where we have multiple training environments that are all potentially different from, but similar to, the testing environments. The goal is to train a \emph{single} reactive policy using the training environments and deploy the policy on a test environment \emph{without any further training}. Previous work suggests  a policy that optimizes  all the training models simultaneously is  stable and robust during testing~\cite{bagnell2001autonomous,atkeson2012efficient}, as the training environments together act as ``regularization" to avoid overfitting and provide generalization.

More formally, let us assume that we have $M$ training environments. At iteration $n$ with $\pi_{\theta_n}$, we execute $\pi_{\theta_n}$ on  the $i$'th environment, generate samples, fit local models, and call MBOC associated with the $i$'th environment to compute $\eta_{\Theta_n^i}$, for all $i\in[M]$.
%At iteration $n$, we roll out $\pi_{\theta_n}$ on  environment $i$ to generate a set of trajectories. For each environment $i$, following the MBOC approach introduced in Sec.~\ref{sec:practical_br}, we learn a  sequence of local linear Gaussian dynamics and compute a sequence of local linear Gaussian policies $\eta_{\Theta^i_n}$ and their associated disadvantages $A^{\eta_{\Theta_n^i}}, \forall i\in[M]$. 
With $A^{\eta_{\Theta_n^i}}$, for all $i \in [M]$, we consider all training environments equally and formalize the objective $L_n(\theta)$ as $L_n(\theta) = \sum_{i=1}^M \mathbb{E}_{s\sim d_{\pi_{\theta_n}}}[\mathbb{E}_{a\sim \pi(\cdot|s;\theta)}[A^{\eta_{\Theta_n^i}}]]$. We update $\theta_n$ to $\theta_{n+1}$ by NGD on $L_n(\theta)$. Intuitively, we update $\pi_{\theta}$ by imitating ${\eta_{\Theta_n^i}}$ simultaneously for all $i\in[M]$. %that can potentially be used for different test environments. 
%Note that MBOC 
%This is one example of showing the benefit of  maintaining multiple policies during training as the goal is to learn a reactive policy generalizes across all environments by imitating all experts simultaneously. 
%By training a single $\pi_{\theta}$ to imitate experts computed by MBOCs associated with individual environments, we provide generalization to $\pi_{\theta}$.

\begin{wrapfigure}{r}{0.55\linewidth}
%\begin{figure}[t!]
	\centering
	\begin{subfigure}[l]{0.26\textwidth}
        \includegraphics[width=1.1\textwidth,keepaspectratio]{./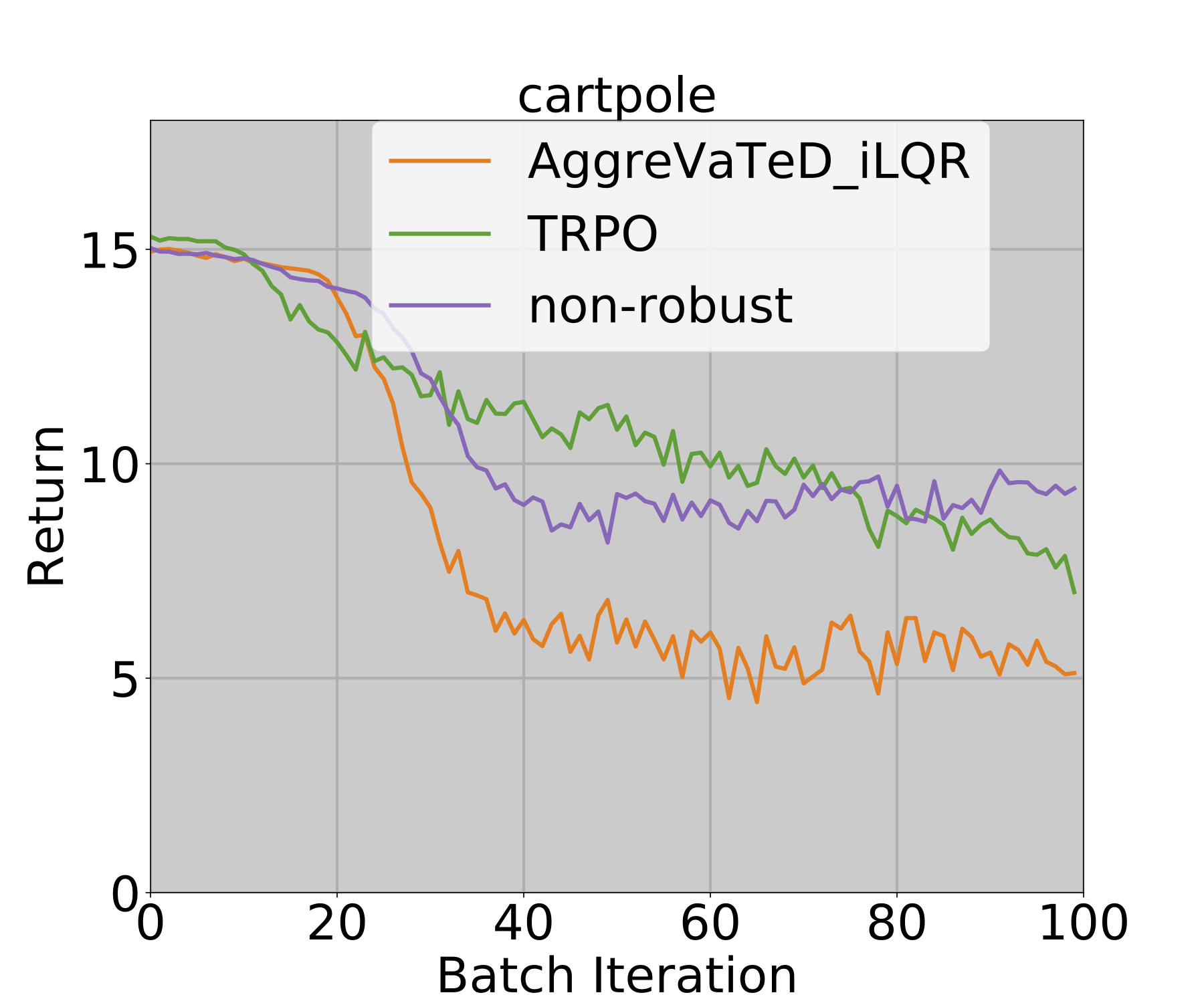}
        \caption{Cart-Pole}
        \label{fig:cartpole_robust}
    \end{subfigure}
    %\end{subfigure}
    \begin{subfigure}[l]{0.26\textwidth}
        \includegraphics[width=1.1\textwidth,keepaspectratio]{./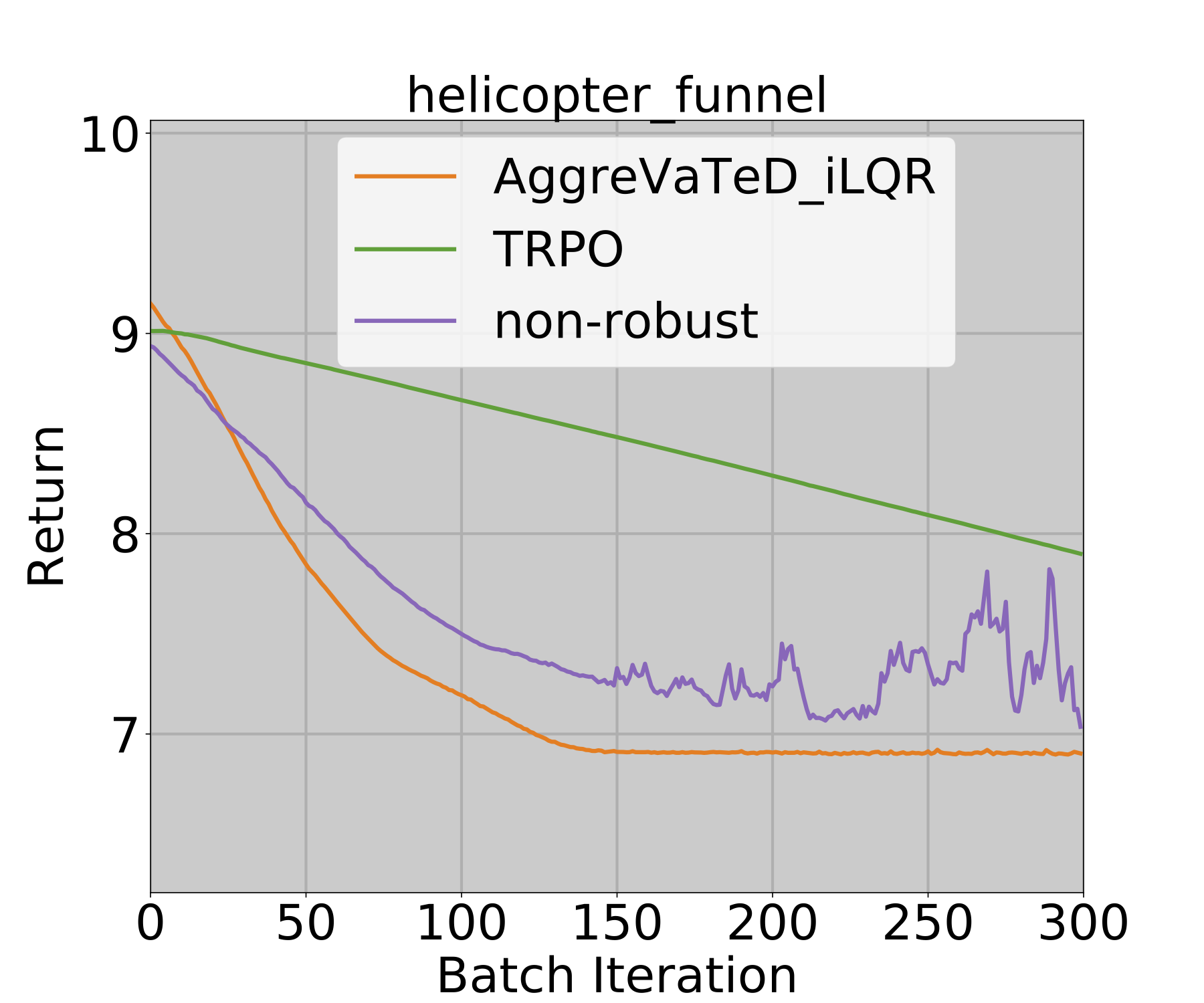}
        \caption{Helicopter Funnel}
        \label{fig:funnel_robust}
    \end{subfigure}
    %\end{subfigure}
    \vspace{-5pt}
    \caption{Performance (mean in log-scale on y-axis) versus episodes ($n$ on x-axis) in robust control. }
    \label{fig:robust_exp}
    %\vspace{-5pt}
%\end{figure}
\end{wrapfigure}

We consider two simulation tasks, cartpole balancing and helicopter funnel. For each task, we create ten environments by varying the physical parameters (e.g., mass of helicopter, mass and length of pole). We use 7 of the environments for training and the remaining three for testing. We compare our algorithm against TRPO, which could be regarded as a model-free, natural gradient version of the first-order algorithm proposed in \cite{atkeson2012efficient}. We also ran our algorithm on a single randomly picked training environment, but still tested on test environments, which is denoted as \emph{non-robust} in Fig.~\ref{fig:robust_exp}.
Fig.~\ref{fig:robust_exp} summarizes the comparison between our approach and baselines. Similar to the trend we saw in the previous section, our approach is more sample-efficient in the robust policy optimization setup as well. It is interesting to see the ``non-robust" approach fails to further converge, which illustrates the overfitting phenomenon: the learned policy overfits to one particular training environment.%, which hurts the testing performance. 

\vspace{-2mm}
\section{Conclusion}
\vspace{-1mm}
We present and analyze Dual  Policy Iteration---a framework that alternatively computes a non-reactive policy via more advanced and systematic search, and updates a reactive policy via imitating the non-reactive one. Recent algorithms that have been successful in practice, like AlphaGo-Zero and ExIt, are subsumed by the DPI framework. We then provide a simple instance of DPI for RL with unknown dynamics, where the instance integrates local model fit, local model-based search, and reactive policy improvement via imitating the teacher--the nearly local-optimal policy resulting from model-based search. We theoretically show that integrating model-based search and imitation into policy improvement could result in larger policy improvement at each step. We also experimentally demonstrate the improved sample efficiency compared to strong baselines. 

%which uses MBOC for updating the non-reactive policy and updates the reactive policy via imitating MBOC polices. Both our theoretical analysis and empirical results suggest that our approach can outperform existing API algorithms. 

Our work also opens some new problems. In theory, the performance improvement during one call of optimal control with the local accurate model depends on a term that scales quadratically with respect to the horizon $1/(1-\gamma)$. We believe the dependency on horizon can be brought down by leveraging system identification methods focusing on multi-step prediction \cite{venkatraman2015improving,sun2016learning}. On the practical side, our specific implementation has some limitations due to the choice of  LQG as the underlying OC algorithm. LQG-based methods usually require the dynamics and cost functions to be somewhat smooth so that they can be locally approximated by polynomials. We also found that LQG planning horizons must be relatively short, as the approximation error from polynomials will likely compound over the horizon. We plan to explore the possibility of learning a non-linear dynamics and using more advanced non-linear optimal control techniques such as Model Predictive Control (MPC) for more sophisticated control tasks.

%\section*{Acknowledgement}
%We thank Sergey Levine for pointing out the relationship between DPI and GPS and for valuable discussion. 

\section*{Acknowledgement}
We thank Sergey Levine for valuable discussion.  WS is supported in part by Office of Naval Research contract N000141512365

%\small
%\todo{Where are acks?}
\bibliography{reference}

\newpage
\onecolumn
\appendix

\section{Missing Proofs}

\subsection{Useful Lemmas}
As we work in finite probability space, we will use the following fact regarding total variation distance and L1 distance for any two probability measures $P$ and $Q$:
\begin{align}
    \|P-Q\|_1 = 2D_{TV}(P, Q).
\end{align}

Recall that $d_{\pi} = (1-\gamma)\sum_{t=0}^{\infty} \gamma^t d_{\pi}^t$. The following lemma shows that if two policies are close with each other in terms of the trust region constraint we defined in the paper, then the state visitations of the two policies are not that far away.
\begin{lemma}
\label{lemma:state_dis_diff}
Given any two policy $\pi_1$ and $\pi_2$ such that $\mathbb{E}_{s\sim d_{\pi_1}}\left[D_{TV}(\pi_1(\cdot|s), \pi_2(\cdot|s))\right] \leq \alpha$, then we have:
\begin{align}
    \|d_{\pi_1} - d_{\pi_2}\|_1 \leq \frac{2\alpha}{1-\gamma}. 
\end{align}
\end{lemma}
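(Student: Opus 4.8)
The plan is to prove Lemma~\ref{lemma:state_dis_diff} by first controlling the per-timestep discrepancy $\|d_{\pi_1}^t - d_{\pi_2}^t\|_1$ and then summing against the discount weights. Since $d_{\pi_1} - d_{\pi_2} = (1-\gamma)\sum_{t=0}^\infty \gamma^t (d_{\pi_1}^t - d_{\pi_2}^t)$, by the triangle inequality it suffices to show that $\|d_{\pi_1}^t - d_{\pi_2}^t\|_1$ grows at most linearly in $t$, controlled by the trust-region parameter. Concretely, I would aim for a bound of the form $\|d_{\pi_1}^t - d_{\pi_2}^t\|_1 \le 2\sum_{i=0}^{t-1}\mathbb{E}_{s\sim d_{\pi_1}^i}[D_{TV}(\pi_1(\cdot|s),\pi_2(\cdot|s))]$, which captures the intuition that the distributions can only diverge through the per-step action disagreements accumulated so far.

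First I would set up an induction on $t$. The base case $t=0$ is immediate since $d_{\pi_1}^0 = d_{\pi_2}^0 = \rho_0$. For the inductive step, I would write the one-step recursion $d_{\pi}^{t+1}(s') = \sum_{s,a} d_\pi^t(s)\pi(a|s)P(s'|s,a)$ and decompose the difference $d_{\pi_1}^{t+1} - d_{\pi_2}^{t+1}$ into two pieces by adding and subtracting the hybrid term $\sum_{s,a} d_{\pi_1}^t(s)\pi_2(a|s)P(s'|s,a)$: one piece comes from the difference in state distributions $d_{\pi_1}^t - d_{\pi_2}^t$ (with the same transition kernel $\pi_2$ composed with $P$ applied), and the other from the difference in policies $\pi_1 - \pi_2$ (weighted by $d_{\pi_1}^t$). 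For the first piece I would use that composition with a stochastic kernel is a contraction in $L_1$ (more precisely, nonexpansive), so its $L_1$ norm is at most $\|d_{\pi_1}^t - d_{\pi_2}^t\|_1$. For the second piece I would bound its $L_1$ norm by $\sum_s d_{\pi_1}^t(s)\|\pi_1(\cdot|s) - \pi_2(\cdot|s)\|_1 = 2\mathbb{E}_{s\sim d_{\pi_1}^t}[D_{TV}(\pi_1(\cdot|s),\pi_2(\cdot|s))]$, again using that $P$ is a stochastic kernel and the $L_1$/TV identity from the preliminaries.

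Once the per-timestep bound is in hand, I would plug it into $\|d_{\pi_1} - d_{\pi_2}\|_1 \le (1-\gamma)\sum_{t=0}^\infty \gamma^t \cdot 2\sum_{i=0}^{t-1}\mathbb{E}_{s\sim d_{\pi_1}^i}[D_{TV}(\pi_1(\cdot|s),\pi_2(\cdot|s))]$ and swap the order of summation, grouping by $i$: the coefficient of the $i$-th term becomes $2(1-\gamma)\sum_{t=i+1}^\infty \gamma^t = 2\gamma^{i+1}$. This gives $\|d_{\pi_1} - d_{\pi_2}\|_1 \le \frac{2\gamma}{1-\gamma}\cdot(1-\gamma)\sum_{i=0}^\infty \gamma^i \mathbb{E}_{s\sim d_{\pi_1}^i}[\cdots] = \frac{2\gamma}{1-\gamma}\,\mathbb{E}_{s\sim d_{\pi_1}}[D_{TV}(\pi_1(\cdot|s),\pi_2(\cdot|s))] \le \frac{2\gamma\alpha}{1-\gamma} \le \frac{2\alpha}{1-\gamma}$, where the last step uses $\gamma < 1$ and recognizes that $(1-\gamma)\sum_i \gamma^i d_{\pi_1}^i = d_{\pi_1}$, so the hypothesis $\mathbb{E}_{s\sim d_{\pi_1}}[D_{TV}(\pi_1,\pi_2)] \le \alpha$ applies directly.

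The main obstacle I anticipate is the bookkeeping in the inductive step: making sure the "add and subtract the hybrid term" decomposition is done on the correct variable (adding/subtracting a kernel applied to $d_{\pi_1}^t$ with policy $\pi_2$, not the other way around, so that the policy-difference term is weighted by $d_{\pi_1}^t$ — which is what the hypothesis controls — rather than by $d_{\pi_2}^t$), and correctly invoking the nonexpansiveness of stochastic-matrix multiplication in the $L_1$ norm. Everything else is a routine geometric-series manipulation, and the $\gamma$ factor only helps (the stated bound is slightly loose, replacing $\gamma$ by $1$).
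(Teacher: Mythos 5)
Your proof is correct and takes essentially the same route as the paper: both establish the per-timestep bound $\|d_{\pi_1}^t - d_{\pi_2}^t\|_1 \le \sum_{i=0}^{t-1}\mathbb{E}_{s\sim d_{\pi_1}^i}\big[\|\pi_1(\cdot|s)-\pi_2(\cdot|s)\|_1\big]$ and then sum it against the discount weights, absorbing the resulting $\gamma^{i+1}\le\gamma^i$ slack into the stated constant. The only cosmetic difference is that you obtain that intermediate inequality by a forward induction with a one-step hybrid-kernel decomposition, while the paper unrolls the trajectory probabilities and telescopes on the conditional distributions $P(s_t = s \mid s_i;\pi)$; the two derivations are equivalent.
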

\begin{proof}
Fix a state $s$ and time step $t$, let us first consider $d_{\pi_1}^t(s) - d_{\pi_2}^t(s)$.
\begin{align}
    &d_{\pi_1}^t(s) - d_{\pi_2}^t(s) \nonumber\\
    &= \sum_{s_0,s_1,..,s_{t-1}}\sum_{a_0,a_1,..,a_{t-1}}\Big(\rho(s_0)\pi_1(a_0|s_0)P_{s_0,a_0}(s_2)...\pi_1(a_{t-1}|s_{t-1})P_{s_{t-1},a_{t-1}}(s)      \nonumber\\
    & - \rho(s_0)\pi_2(a_0|s_0)P_{s_0,a_0}(s_1)...\pi_2(a_{t-1}|s_{t-1})P_{s_{t-1},a_{t-1}}(s)\Big) \nonumber\\
    & = \sum_{s_0}\rho(s_0)\sum_{a_0}\pi_1(a_0|s_0)\sum_{s_1}P_{s_0,a_0}(s_1)...\sum_{a_{t-1}}\pi_1(a_{t-1}|s_{t-1}) P_{s_{t-1},a_{t-1}}(s) \nonumber\\
    & \;\;\;\;\; - \sum_{s_0}\rho(s_0)\sum_{a_0}\pi_2(a_0|s_0)\sum_{s_1}P_{s_0,a_0}(s_1)...\sum_{a_{t-1}}\pi_2(a_{t-1}|s_{t-1}) P_{s_{t-1},a_{t-1}}(s) \nonumber\\
    & = \sum_{s_0}\rho(s_0)\sum_{a_0}\pi_1(a_0|s_0)P(s_t=s|s_0,a_0;\pi_1) - \sum_{s_0}\rho(s_0)\sum_{a_0}\pi_2(a_0|s_0)P(s_t=s|s_0,a_0;\pi_2),
    %& = \sum_{s_0}\rho(s_0)\sum_{a_0}\pi_1(a_0|s_0)\sum_{s_1}P_{s_0,a_0}(s_1)...\sum_{a_{t-1}}\pi_1(a_{t-1}|s_{t-1}) P_{s_{t-1},a_{t-1}}(s) \nonumber\\
    %& \;\;\;\;\; -\sum_{s_0}\rho(s_0)\sum_{a_0}\pi_1(a_0|s_0)\sum_{s_1}P_{s_0,a_0}(s_1)...\sum_{a_{t-1}}\pi_2(a_{t-1}|s_{t-1}) P_{s_{t-1},a_{t-1}}(s) \nonumber\\
    %& \;\;\;\;\; + \sum_{s_0}\rho(s_0)\sum_{a_0}\pi_1(a_0|s_0)\sum_{s_1}P_{s_0,a_0}(s_1)...\sum_{a_{t-1}}\pi_2(a_{t-1}|s_{t-1}) P_{s_{t-1},a_{t-1}}(s) \nonumber\\
    %& \;\;\;\;\; - \sum_{s_0}\rho(s_0)\sum_{a_0}\pi_2(a_0|s_0)\sum_{s_1}P_{s_0,a_0}(s_1)...\sum_{a_{t-1}}\pi_2(a_{t-1}|s_{t-1}) P_{s_{t-1},a_{t-1}}(s) 
\end{align} where $P(s_t=s | s_0,a_0;\pi)$ stands for the probability of reaching state $s$ at time step $t$, starting at $s_0$ and $a_0$ and then following $\pi$. Continue, we have:
\begin{align}
 &|d_{\pi_1}^t(s) - d_{\pi_2}^t(s)| \nonumber\\
 & =  |\sum_{s_0}\rho(s_0)\sum_{a_0}\pi_1(a_0|s_0)P(s_t=s|s_0,a_0;\pi_1) - \sum_{s_0}\rho(s_0)\sum_{a_0}\pi_2(a_0|s_0)P(s_t=s|s_0,a_0;\pi_2) |\nonumber\\
 &\leq  |\sum_{s_0}\rho(s_0)\sum_{a_0}\pi_1(a_0|s_0)P(s_t=s|s_0,a_0;\pi_1) - \sum_{s_0}\rho(s_0)\sum_{a_0}\pi_1(a_0|s_0)P(s_t=s|s_0,a_0;\pi_2)| \nonumber\\
 & \;\;\;\;\;\; + |\sum_{s_0}\rho(s_0)\sum_{a_0}\pi_1(a_0|s_0)P(s_t=s|s_0,a_0;\pi_2) - \sum_{s_0}\rho(s_0)\sum_{a_0}\pi_2(a_0|s_0)P(s_t=s|s_0,a_0;\pi_2)| \nonumber\\ 
 & \leq |\sum_{s_1} d_{\pi_1}^1(s_1) \left(P(s_t=s | s_1;\pi_1)  - P(s_t=s|s_1;\pi_2) \right)| + \mathbb{E}_{s_0\sim \rho} \sum_{a_0} |\pi_1(a_0|s_0) - \pi_2(a_0|s_0)|P(s_t=s|s_0,a_0;\pi_2) %\nonumber\\
 %& \leq \sum_{s_1} d_{\pi_1}^1(s_1) \left(P(s_t=s | s_1;\pi_1)  - P(s_t=s|s_1;\pi_2) \right) + \mathbb{E}_{s_0\sim d_{\pi_1}^0} \|\pi(\cdot|s_0) - \pi_2(\cdot|s_0)\|_1
\end{align} 
Add $\sum_{s}$ on both sides of the above equality, we get the following inequality:
\begin{align}
  &\sum_{s}|d_{\pi_1}^t(s) - d_{\pi_2}^t(s)| \nonumber\\  
  &\leq \mathbb{E}_{s_1\sim d_{\pi_1}^1}\sum_{s}|P(s_t=s|s_1;\pi_1) - P(s_t=s|s_1;\pi_2)| + \mathbb{E}_{s_0\sim\rho}\|\pi_1(\cdot|s_0) - \pi_2(\cdot|s_0)\|_1
\end{align}

We can apply similar operations on $P(s_t=s|s_1;\pi_1) - P(s_t=s|s_1;\pi_2)$ as follows:
\begin{align}
&\mathbb{E}_{s_1\sim d_{\pi_1}^1}\sum_s |P(s_t=s|s_1;\pi_1) - P(s_t=s|s_1;\pi_2)| \nonumber\\
& = \mathbb{E}_{s\sim d_{\pi_1}^1}\sum_s|\sum_{a_1}\left[\pi_1(a_1|s_1) P(s_t=s|s_1,a_1;\pi_1) -  \pi_2(a_1|s_1)P(s_t=s|s_1,a_2;\pi_2)\right]| \nonumber\\
& \leq \mathbb{E}_{s_2\sim d_{\pi_1}^2}\sum_s |P(s_t=s|s_2;\pi_1) - P(s_t=s|s_2;\pi_2)| + \mathbb{E}_{s_1\sim d_{\pi_1}^1}[\|\pi_1(\cdot|s_1) - \pi_2(\cdot|s_1)\|_1] \nonumber
%& \leq \mathbb{E}_{s\sim d_{\pi_1}^1}\sum_{a_1}\pi_1(a_1|s_1)\sum_s|P()|%\left[\pi_1(a_1|s_1) P(s_t=s|s_1,a_1;\pi_1)\right] - \mathbb{E}_{s_1\sim d_{\pi_1}^1}\sum_{a_1}\left[\pi_1(a_1|s_1) P(s_t=s|s_1,a_1;\pi_2)\right] \nonumber\\
%&\;\;\;\;\; + \mathbb{E}_{s_1\sim d_{\pi_1}^1}\sum_{a_1}\left[\pi_1(a_1|s_1) P(s_t=s|s_1,a_1;\pi_2)\right] - \mathbb{E}_{s_1\sim d_{\pi_1}^1}\sum_{a_1}\left[\pi_2(a_1|s_1) P(s_t=s|s_1,a_1;\pi_2)\right] \nonumber\\
%& \leq \mathbb{E}_{s_2\sim d_{\pi_1}^2}(P(s_t=s|s_2;\pi_1) - P(s_t=s|s_2;\pi_2)) + \mathbb{E}_{s_1\sim d_{\pi_1}^1}[\|\pi_1(\cdot|s_1) - \pi_2(\cdot|s_1)\|_1]
\end{align} Again, if we continue expand $P(s_t=s|s_2;\pi_1) - P(s_t=s|s_2;\pi_2)$ till time step $t$, we get:
\begin{align}
    \sum_s|d_{\pi_1}^t(s) - d_{\pi_2}^t(s)|\leq \sum_{i=0}^{t-1}\mathbb{E}_{s_i\sim d_{\pi_1}^i}[\|\pi_1(\cdot|s_i) - \pi_2(\cdot|s_i)\|_1]
\end{align}
Hence, for $\|d_{\pi_1} - d_{\pi_2}\|_1$, we have:
\begin{align}
 &\|d_{\pi_1} - d_{\pi_2}\|_1 \leq  (1-\gamma)\sum_{t=0}^{\infty} \gamma^t\|d_{\pi_1}^t - d_{\pi_2}^t\|_1 \nonumber\\
 &\leq \sum_{t=0}^{\infty}\gamma^t \mathbb{E}_{s\sim d_{\pi_1}^t}[\|\pi_1(\cdot|s) - \pi_2(\cdot|s)\|_1] \leq \sum_{t=0}^{\infty}2\gamma^t \mathbb{E}_{s\sim d_{\pi_1}^t}[D_{TV}(\pi_1(\cdot|s),\pi_2(\cdot|s))]\leq  \frac{2\alpha}{1-\gamma}.
% \frac{1}{T} [T \sum_{t=1}^T \mathbb{E}_{s_i\sim d_{\pi_1}^i}[\|\pi_1(\cdot|s_i) - \pi_2(\cdot|s_i)\|_1]] = \sum_{t=1}^T \mathbb{E}_{s_i\sim d_{\pi_1}^i}[\|\pi_1(\cdot|s_i) - \pi_2(\cdot|s_i)\|_1] = T\alpha.
\end{align}
\end{proof} 

\begin{lemma}
\label{lemma:switch_dist}
For any two distribution $P$ and $Q$ over $\mathcal{X}$, and any bounded function $f:\mathcal{X}\to\mathbb{R}$ such that $|f(x)|\leq c,\forall x\in\mathcal{X}$, we have:
\begin{align}
|\mathbb{E}_{x\sim P}[f(x)] - \mathbb{E}_{x\sim Q}[f(x)]| \leq c\|P - Q\|_1.
\end{align}
\end{lemma}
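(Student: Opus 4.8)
The plan is to expand the difference of expectations into a single sum (or integral) over the shared domain $\mathcal{X}$, and then bound it termwise. Concretely, since both $P$ and $Q$ are distributions over $\mathcal{X}$, I would write
\begin{align}
\mathbb{E}_{x\sim P}[f(x)] - \mathbb{E}_{x\sim Q}[f(x)] = \sum_{x\in\mathcal{X}} \bigl(P(x) - Q(x)\bigr) f(x),
\end{align}
using that both expectations are taken against the same measurable space so their linear combination collapses to a single sum over $x$ with the signed weight $P(x) - Q(x)$.

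Next I would apply the triangle inequality to pull the absolute value inside the sum, giving $\bigl|\sum_x (P(x)-Q(x)) f(x)\bigr| \le \sum_x |P(x)-Q(x)|\,|f(x)|$. Then I would invoke the uniform bound $|f(x)| \le c$ for all $x\in\mathcal{X}$ to factor out $c$, obtaining $\sum_x |P(x)-Q(x)|\,|f(x)| \le c \sum_x |P(x)-Q(x)|$. Finally I would recognize $\sum_x |P(x)-Q(x)| = \|P-Q\|_1$ by definition of the $L_1$ distance, which yields the claimed bound $|\mathbb{E}_{x\sim P}[f(x)] - \mathbb{E}_{x\sim Q}[f(x)]| \le c\|P-Q\|_1$.

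There is essentially no obstacle here: this is a routine three-line argument (Hölder's inequality with the $L_1$/$L_\infty$ pairing, specialized to a bounded $f$). The only point that warrants a word of care is making sure the manipulation is stated for the finite (or at least $\sigma$-finite, dominated) setting the paper works in, so that the sum/integral of $(P(x)-Q(x))f(x)$ is absolutely convergent and the interchange of sum and absolute value is legitimate; since the paper explicitly restricts to finite probability spaces, this is immediate, and the continuous case follows identically by replacing sums with integrals against a dominating measure.
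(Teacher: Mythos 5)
Your proposal is correct and follows exactly the same route as the paper's proof: expand the difference of expectations as a single sum weighted by $P(x)-Q(x)$, apply the triangle inequality, bound $|f(x)|$ by $c$, and identify the remaining sum as $\|P-Q\|_1$. Nothing is missing.
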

\begin{proof}
\begin{align}
   &|\mathbb{E}_{x\sim P}[f(x)] - \mathbb{E}_{x\sim Q}[f(x)]| = |\sum_{x\in\mathcal{X}} P(x)f(x) - Q(x)f(x)| \nonumber\\
   &\leq \sum_{x} |P(x)f(x) - Q(x)f(x)| \leq \sum_x |f(x)||P(x) - Q(x)| \nonumber\\
   & \leq c\sum_x |P(x) - Q(x)| = c\|P-Q\|_1.
\end{align}
\end{proof}

\subsection{Proof of Theorem~\ref{them:local_oc_theorem}}
\label{sec:proof_of_oc}
Recall that we denote $d_{{\pi}}\pi$ as the joint state-action distribution under policy $\pi$. To prove Theorem~\ref{them:local_oc_theorem}, we will use Lemma 1.2 presented in the Appendix from \cite{ross2012agnostic} to prove the following claim:  
\begin{lemma}
Suppose we learned a approximate model $\hat{P}$ and obtain the optimal policy $\eta_n$ with respect to the objective function $J(\pi)$ under $\hat{P}$ and the trust-region constraint $\mathbb{E}_{s\sim d_{\pi_n}}D_{TV}(\pi, \pi_n)\leq \alpha$, then compare to $\pi^*_n$, we have:
\begin{align}
\label{eq:from_ross_2}
J(\eta_n) - J(\eta^*_n) \leq \frac{\gamma}{2(1-\gamma)}\left(  \mathbb{E}_{(s,a)\sim d_{\eta_n}\eta_n}\left[ \|\hat{P}_{s,a} - P_{s,a}\|_1  \right] + \mathbb{E}_{(s,a)\sim d_{\eta^*_n}{\eta^*_n}}\left[ \|\hat{P}_{s,a} - P_{s,a}\|_1  \right]\right).
\end{align}
\end{lemma}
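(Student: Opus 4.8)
The plan is to bound the difference $J(\eta_n) - J(\eta_n^*)$ by comparing the performance of the two policies under the \emph{real} dynamics $P$ through their performance under the \emph{learned} dynamics $\hat P$. The key leverage point is a lemma of the type in \cite{ross2012agnostic}, Lemma 1.2, which controls how much the value of a fixed policy changes when the transition model is perturbed: for any fixed policy $\pi$, the gap between its true value $J(\pi)$ and its value $\hat J(\pi)$ under $\hat P$ is at most $\frac{\gamma}{2(1-\gamma)}\,\mathbb{E}_{(s,a)\sim d_\pi\pi}\|\hat P_{s,a} - P_{s,a}\|_1$ (the precise constant coming from a telescoping / simulation-lemma-style argument over the horizon, with the state-action distribution being that of $\pi$ under the \emph{true} dynamics in one version or under $\hat P$ in the other; either way one pays the $L_1$ model error integrated against one of the two occupancy measures). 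I would first state and invoke that lemma as a black box, in the two instantiations $\pi = \eta_n$ and $\pi = \eta_n^*$.

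\textbf{Key steps.} First, write $J(\eta_n) - J(\eta_n^*) = \bigl(J(\eta_n) - \hat J(\eta_n)\bigr) + \bigl(\hat J(\eta_n) - \hat J(\eta_n^*)\bigr) + \bigl(\hat J(\eta_n^*) - J(\eta_n^*)\bigr)$, where $\hat J$ denotes the objective evaluated under $\hat P$. Second, observe that the middle term is $\le 0$: by construction $\eta_n$ is the minimizer of $\hat J$ over the trust region $\{\pi : \mathbb{E}_{s\sim d_{\pi_n}}D_{TV}(\pi,\pi_n)\le\alpha\}$ (this is exactly Eq.~\ref{eq:local_optimal_oc}), and $\eta_n^*$ is a feasible point of that same trust region (it is defined as the true-model optimum \emph{subject to the identical constraint}). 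Hence $\hat J(\eta_n) \le \hat J(\eta_n^*)$ and the middle term drops. Third, bound the first and third terms each by the model-perturbation lemma, giving $J(\eta_n)-\hat J(\eta_n) \le \frac{\gamma}{2(1-\gamma)}\mathbb{E}_{(s,a)\sim d_{\eta_n}\eta_n}\|\hat P_{s,a}-P_{s,a}\|_1$ and symmetrically $\hat J(\eta_n^*) - J(\eta_n^*) \le \frac{\gamma}{2(1-\gamma)}\mathbb{E}_{(s,a)\sim d_{\eta_n^*}\eta_n^*}\|\hat P_{s,a}-P_{s,a}\|_1$. Summing yields exactly Eq.~\ref{eq:from_ross_2}.

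\textbf{Main obstacle.} The delicate point is the second step — arguing that $\eta_n^*$ is genuinely feasible for the optimization that defines $\eta_n$, so that the ``middle term'' is nonpositive. Both are constrained by the \emph{same} trust region $\mathbb{E}_{s\sim d_{\pi_n}}D_{TV}(\pi,\pi_n)\le\alpha$, which is phrased in terms of $d_{\pi_n}$ (not $d_\pi$ or $d_{\hat P,\pi}$), so the constraint set is identical regardless of which dynamics one uses to evaluate the objective; this is what makes the argument go through cleanly and is worth stating explicitly. The other place requiring care is the exact statement of the borrowed lemma from \cite{ross2012agnostic}: one must make sure the occupancy measure appearing in each bound is the one claimed (true-dynamics occupancy of the policy in question), which follows from running the simulation/telescoping argument so that at each time step the ``already-diverged'' prefix is distributed according to the policy's true state-action occupancy while the one-step model discrepancy is charged at that step. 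Everything else — the triangle-inequality split and the final summation — is routine.
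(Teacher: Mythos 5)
Your proposal is correct and follows essentially the same route as the paper: the paper invokes the two-policy identity from Lemma 1.2 / Corollary 1.2 of \cite{ross2012agnostic}, which is exactly your three-term decomposition with the two one-policy simulation bounds already combined, and then kills the middle term $\mathbb{E}_{s\sim\rho_0}[\hat{V}^{\eta_n}(s)-\hat{V}^{\eta_n^*}(s)]\le 0$ by the optimality of $\eta_n$ under $\hat{P}$ and the feasibility of $\eta_n^*$ in the same trust region. The subtleties you flag (identical constraint set phrased via $d_{\pi_n}$, and which occupancy measure appears in each bound) are handled the same way in the paper.
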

\begin{proof}
Denote $\hat{V}^{\pi}$ as the value function of policy $\pi$ under the approximate model $\hat{P}$. From Lemma 1.2 and Corollary 1.2 in \cite{ross2012agnostic}, we know that for any two policies $\pi_1$ and $\pi_2$, we have:
\begin{align}
\label{eq:from_ross_1}
J(\pi_1) - J(\pi_2)&= \mathbb{E}_{s\sim \rho_0} [\hat{V}^{\pi_1}(s) - \hat{V}^{\pi_2}(s)] \nonumber\\
&+ \frac{\gamma}{2(1-\gamma)}\left(  \mathbb{E}_{(s,a)\sim d_{\pi_1}\pi_1}\left[ \|\hat{P}_{s,a} - P_{s,a}\|_1  \right] + \mathbb{E}_{(s,a)\sim d_{\pi_2}{\pi_2}}\left[ \|\hat{P}_{s,a} - P_{s,a}\|_1  \right]\right).
\end{align} Now replace $\pi_1$ with $\eta_n$ and $\pi_2$ with $\eta^*_n$. Note that both $\eta_n$ and $\eta^*_n$ are in the trust region constraint $\mathbb{E}_{s\sim d_{\pi_n}} D_{TV}(\pi, \pi_n)\leq \alpha$ by definition.  As $\eta_n$ is the optimal control under the approximate model $\hat{P}$ (i.e., the optimal solution to Eq.~\ref{eq:local_optimal_oc}),  we must have  $\mathbb{E}_{s\sim \rho_0} [\hat{V}^{\eta_n}(s) - \hat{V}^{\eta^*_n}(s)]\leq 0$. Substitute it back to Eq.~\ref{eq:from_ross_1}, we immediately prove the above lemma. 
\end{proof}

The above lemma shows that the performance gap between $\eta_n$ and $\eta^*_n$ is measured under the state-action distributions measured from $\eta_n$ and $\eta^*_n$ while our model $\hat{P}$ is only accurate under the state-action distribution from $\pi_n$. Luckily due to the trust-region constraint $\mathbb{E}_{s\sim d_{\pi_n}}D_{TV}(\pi, \pi_n)$ and the fact that $\eta_n$ and $\eta^*_n$ are both in the trust-region, we can show that $d_{\eta_n}\eta_n$, $d_{\pi^*_n}\pi^*_n$ are not that far from $d_{\pi_n}\pi_n$ using Lemma~\ref{lemma:state_dis_diff}:
\begin{align}
    %&2D_{TV}(d_{\eta_n}\eta_n, d_{\pi_n}\pi_n) = 
    &\|d_{\eta_n}\eta_n - d_{\pi_n}\pi_n\|_1 \leq \|d_{\eta_n}\eta_n - d_{\pi_n}\eta_n\|_1 + \|d_{\pi_n}\eta_n - d_{\pi_n}\pi_n\|_1 \nonumber\\
    &\leq \|d_{\eta_n} - d_{\pi_n}\|_1 + \mathbb{E}_{s\sim d_{\pi_n}}[\|\eta_n(\cdot|s) - \pi_n(\cdot|s)\|_1] \leq \frac{2\alpha}{1-\gamma} + 2\alpha \leq \frac{4\alpha}{1-\gamma}.
\end{align}
similarly, for $\pi_n^*$ we have:
\begin{align}
  \|d_{\eta^*_n}\eta^*_n - d_{\pi_n}\pi_n\|_1  \leq \frac{4\alpha}{1-\gamma}.
\end{align}

Go back to Eq.~\ref{eq:from_ross_2}, let us replace $\mathbb{E}_{d_{\eta_n}\eta_n}$ and $\mathbb{E}_{d_{\eta^*_n}\eta^*_n}$ by $\mathbb{E}_{d_{\pi_n}\pi_n}$ and using Lemma~\ref{lemma:switch_dist}, we will have:
\begin{align}
\label{eq:middle_result_1}
&|\mathbb{E}_{(s,a)\sim d_{\eta_n}\eta_n}[\|\hat{P}_{s,a}-P_{s,a}\|_1] - \mathbb{E}_{(s,a)\sim d_{\pi_n}\pi_n}[\|\hat{P}_{s,a}-P_{s,a}\|_1] | \leq 2\|d_{\eta_n}\eta_n - d_{\pi_n}\pi_n\|_1 \leq \frac{8\alpha}{1-\gamma} \nonumber\\
%2D_{TV}(d_{\pi'_n}\pi'_n, d_{\pi_n}\pi_n) \leq \frac{2\gamma\alpha}{(1-\gamma)^2} \nonumber\\
&\Rightarrow \mathbb{E}_{(s,a)\sim d_{\eta_n}\eta_n}[\|\hat{P}_{s,a}-P_{s,a}\|_1] \leq   \mathbb{E}_{(s,a)\sim d_{\pi_n}\pi_n}[\|\hat{P}_{s,a}-P_{s,a}\|_1] +  \frac{8\alpha}{(1-\gamma)} ,
\end{align} and similarly, 
\begin{align}
\label{eq:middle_result_2}
\mathbb{E}_{(s,a)\sim d_{\eta^*_n}\eta^*_n}[\|\hat{P}_{s,a}-P_{s,a}\|_1] \leq   \mathbb{E}_{(s,a)\sim d_{\pi_n}\pi_n}[\|\hat{P}_{s,a}-P_{s,a}\|_1] +  \frac{8\alpha}{(1-\gamma)}. 
\end{align} Combine Eqs.~\ref{eq:middle_result_1} and \ref{eq:middle_result_2}, we have:
\begin{align}
J(\eta_n) - J(\eta^*_n)&\leq \frac{\gamma}{2(1-\gamma)}\left(2\mathbb{E}_{(s,a)\sim d_{\pi_n}\pi_n}[\|\hat{P}_{s,a} - P_{s,a}\|_1]  +16\alpha/(1-\gamma)  \right) \nonumber\\
& = \frac{\gamma\delta}{1-\gamma} + \frac{8\gamma\alpha}{(1-\gamma)^2} = O\left(\frac{\gamma\delta}{1-\gamma}\right) + O\left(\frac{\gamma\alpha}{(1-\gamma)^2}\right).
\end{align}Using the definition of $\Delta(\alpha)$, adding $J(\pi_n)$ and subtracting $J(\pi_n)$ on the LHS of the above inequality, we prove the theorem.

\subsection{Proof of Theorem~\ref{them:improvement}}
\label{sec:proof_of_mi}
The definition of $\pi_{n+1}$ implies that $\mathbb{E}_{s\sim d_{\pi_n}}[D_{TV}(\pi_{n+1}(\cdot|s), \pi_n(\cdot|s))]\leq \beta$.  %The \emph{Coupling Lemma} \cite{levin2017markov} shows $\pi_{n+1}$ and $\pi_n$ are $\beta$-coupling pair and if we sample $a_1$ from $\pi_{n+1}(\cdot|s)$ and $a_2$ from $\pi_n(\cdot|s)$, we have $Pr(a_1\neq a_2) = \beta$ (also see \citet{schulman2015trust}). With $\beta$-coupling, we can compute the total variation distance between the two state distributions: $d_{\pi_{n+1}}^t$ and $d_{\pi_n}^t$. Define event $A$ as the event that $\pi_{n+1}$ and $\pi_n$ agrees with each other at every time step from $0$ to $t$. We have:
%\begin{align}
%    d_{\pi_{n+1}}^t(s) = Pr(A) d_{\pi_{n+1}}^t(s|A) + Pr(\bar{A})d_{\pi_{n+1}}^t(s|\bar{A}),
%\end{align} where $d_{\pi(\cdot|A)}^t$ is the conditional state-distribution under policy $\pi$ at time step $t$ conditioned on event $A$ happens. Similarly, we have:
%\begin{align}
%d_{\pi_{n}}^t(s) = Pr(A) d_{\pi_{n}}^t(s|A) + Pr(\bar{A})d_{\pi_{n}}^t(s|\bar{A}).
%\end{align} 
Using Lemma~\ref{lemma:state_dis_diff}, we will have that  the total variation distance between $d_{\pi_{n+1}}^t$ and $d_{\pi_n}^t$ is:
\begin{align}
\|d_{\pi_{n+1}} - d_{\pi_n}\|_1\leq \frac{2\beta}{1-\gamma}.
%&D_{TV}(d_{\pi_{n+1}}^t,d_{\pi_n}^t) = \frac{1}{2}\sum_{s} |d_{\pi_{n+1}}^t(s) - d_{\pi_{n}}^t(s)|= \frac{1}{2}\sum_{s}Pr(\bar{A})|d_{\pi_{n+1}}^t(s|\bar{A}) - d_{\pi_{n}}^t(s|\bar{A})| \\&= Pr(\bar{A})D_{TV}(d_{\pi_{n+1}}^t(\cdot|\bar{A}), d_{\pi_{n}}^t(\cdot|\bar{A})) = (1-(1-\beta)^t), 
\end{align} %where we use the fact that the total variation distance between any two distribution is upper bounded by constant 1. 

Now we can compute the performance improvement of $\pi_{n+1}$ over $\eta_n$ as follows:
\begin{align}
&(1-\gamma)(J(\pi_{n+1}) - J(\eta_n)) = \mathbb{E}_{s\sim d_{\pi_{n+1}}}\left[ \mathbb{E}_{a\sim \pi_{n+1}} [A^{\eta_n}(s,a)]   \right]  \nonumber\\
& = \mathbb{E}_{s\sim d_{\pi_{n+1}}}\left[ \mathbb{E}_{a\sim \pi_{n+1}} [A^{\eta_n}(s,a)]   \right] - \mathbb{E}_{s\sim d_{\pi_{n}}}\left[ \mathbb{E}_{a\sim \pi_{n+1}} [A^{\eta_n}(s,a)]   \right] + \mathbb{E}_{s\sim d_{\pi_{n}}}\left[ \mathbb{E}_{a\sim \pi_{n+1}} [A^{\eta_n}(s,a)]   \right] \nonumber\\
& \leq \left|\mathbb{E}_{s\sim d_{\pi_{n+1}}}\left[ \mathbb{E}_{a\sim \pi_{n+1}} [A^{\eta_n}(s,a)]   \right] - \mathbb{E}_{s\sim d_{\pi_{n}}}\left[ \mathbb{E}_{a\sim \pi_{n+1}} [A^{\eta_n}(s,a)]   \right]\right| + \mathbb{E}_{s\sim d_{\pi_{n}}}\left[ \mathbb{E}_{a\sim \pi_{n+1}} [A^{\eta_n}(s,a)]   \right]  \nonumber\\
&\leq \frac{2\varepsilon\beta}{1-\gamma} + \mathbb{E}_{s\sim d_{\pi_{n}}}\left[ \mathbb{E}_{a\sim \pi_{n+1}} [A^{\eta_n}(s,a)]   \right] \nonumber \\
%&\leq \sum_{t=0}^{\infty} (1-\gamma)\gamma^t\left|  \mathbb{E}_{s\sim d_{\pi_{n+1}}^t} \left[ \mathbb{E}_{a\sim \pi_{n+1}} [A^{\pi'_n}(s,a)]   \right] -  \mathbb{E}_{s\sim d_{\pi_{n}}^t} \left[ \mathbb{E}_{a\sim \pi_{n+1}} [A^{\pi'_n}(s,a)]   \right] \right| + \mathbb{E}_{s\sim d_{\pi_{n}}}\left[ \mathbb{E}_{a\sim \pi_{n+1}} [A^{\pi'_n}(s,a)]   \right] \nonumber \\
%&\leq \sum_{t=0}^{\infty} \epsilon \gamma^t D_{TV}(d_{\pi_{n+1}}^t, d_{\pi_n}^t) +\mathbb{E}_{s\sim d_{\pi_{n}}}\left[ \mathbb{E}_{a\sim \pi_{n+1}} [A^{\pi'_n}(s,a)]   \right]  \nonumber\\
%& \leq \epsilon\sum_{t=0}^{\infty}\gamma^t (1-(1-\beta)^t) +  \mathbb{A}_{n}(\pi_{n+1})  \nonumber\\
& = \frac{2\varepsilon\beta}{1-\gamma} + \mathbb{A}_{n}(\pi_{n+1}) \nonumber\\
& =  \frac{2\varepsilon\beta}{1-\gamma} -  \left|\mathbb{A}_{n}(\pi_{n+1})\right|
\end{align}
Finally, to bound $J(\pi_{n+1}) - J(\pi_n)$, we can simply do:
\begin{align}
&J(\pi_{n+1}) - J(\pi_n) = J(\pi_{n+1}) - J(\eta_n) + J(\eta_n) -  J(\pi_n) \nonumber\\
& \leq \frac{\beta\varepsilon}{(1-\gamma)^2} -  \frac{\left|\mathbb{A}_{n}(\pi_{n+1})\right|}{1-\gamma} - \Delta(\alpha).
\end{align}

\subsection{Global Performance Guarantee for DPI}
\label{sec:global_performance_bound}
When $|\Delta_n(\alpha)|$ and $|\mathbb{A}_n(\pi_{n+1})|$ are small, say $|\Delta_n(\alpha)| \leq \xi, |\mathbb{A}_n(\pi_{n+1})| \leq \xi$, then we can guarantee that $\eta_n$ and $\pi_{n}$ are good policies, if our initial distribution $\rho$ happens to be a good distribution (e.g., close to $d_{\pi^*}$), and the  \emph{realizable assumption} holds:
%\begin{align}
 $\min_{\pi\in\Pi}\mathbb{E}_{s\sim d_{\pi_n}}\left[\mathbb{E}_{a\sim\pi(\cdot|s)}[A^{\eta_n}(s,a)]\right] =\mathbb{E}_{s\sim d_{\pi_n}}\left[\min_{a\sim\mathcal{A}}\left[A^{\eta_n}(s,a)\right]\right]$.
%\label{eq:rich_class_assumption}
%\end{align} 
%We then can relate the performance of $\eta_{n}$ to the optimal policy $\pi^*$. 
We call a policy class $\Pi$ \emph{closed under its convex hull} if for any sequence of policies $\{\pi_i\}_i,\pi_i\in\Pi$, the convex combination $\sum_{i}w_i\pi_i$, for any $w$ such that $w_i\geq 0$ and $\sum_{i}w_i = 1$, also belongs to $\Pi$.
\begin{theorem}
\label{them:corollary_small_improvement}
Under the realizable assumption and the assumption of $\Pi$ is closed under its convex hull, and $\max\{|\mathbb{A}_n(\pi_{n+1})|, \Delta(\alpha)\}\leq \xi\in\mathbb{R}^+$, then for $\eta_n$, we have:
\begin{align}
    &J(\eta_n) - J(\pi^*)   \leq \left(\max_{s}\left(\frac{d_{\pi^*}(s)}{\rho_0(s)} \right)\right)\left(\frac{\xi}{\beta(1-\gamma)^2} + \frac{\xi}{\beta(1-\gamma)} \right). \nonumber
    %\leq \left(   \max_{s}\left(\frac{d_{\pi^*}(s)}{d_{\pi_n}(s)}\right)\right) \delta.  \nonumber
    %\leq \left(   \max_{s}\left(\frac{d_{\pi^*}(s)}{\rho_0(s)}\right)\right) \delta.  \nonumber 
\end{align}
\end{theorem}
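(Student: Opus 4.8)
The plan is to chain together the per-iteration improvement bound from Theorem~\ref{them:improvement} with the realizability assumption and the standard CPI-style distribution-mismatch argument. First I would use the hypothesis $\max\{|\mathbb{A}_n(\pi_{n+1})|, \Delta_n(\alpha)\}\leq \xi$ to rearrange the bound in Theorem~\ref{them:improvement}. Since $J(\pi_{n+1}) - J(\pi_n) \leq \frac{\beta\varepsilon}{(1-\gamma)^2} - \frac{|\mathbb{A}_n(\pi_{n+1})|}{1-\gamma} - \Delta_n(\alpha)$ and the left-hand side must be such that the policy is not improving much (otherwise we could iterate further), the regime where improvement stalls forces $\frac{\beta\varepsilon}{(1-\gamma)^2} \geq \frac{|\mathbb{A}_n(\pi_{n+1})|}{1-\gamma} + \Delta_n(\alpha)$ up to the actual decrease; combined with $\varepsilon$ being controlled by $\xi/\beta$-type quantities (recall $\varepsilon = \max_s|\mathbb{E}_{a\sim\pi_{n+1}}[A^{\eta_n}(s,a)]|$ and $|\mathbb{A}_n(\pi_{n+1})|\leq\xi$), this gives us that the \emph{average} disadvantage $\mathbb{A}_n(\pi_{n+1})$ is small. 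The key step is then to invoke the realizability assumption: $\min_{\pi\in\Pi}\mathbb{E}_{s\sim d_{\pi_n}}[\mathbb{E}_{a\sim\pi(\cdot|s)}[A^{\eta_n}(s,a)]] = \mathbb{E}_{s\sim d_{\pi_n}}[\min_{a}A^{\eta_n}(s,a)]$, which means that $\pi_{n+1}$ (being a near-minimizer within the trust region, and since $\Pi$ is closed under convex hull so the CPI update stays in $\Pi$) witnesses that $\mathbb{E}_{s\sim d_{\pi_n}}[\min_a A^{\eta_n}(s,a)]$ is itself small, i.e., lower bounded by something like $-\xi/\beta$ after accounting for the conservative step size $\beta$ that shrinks the achievable one-step gain.

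Next I would apply the Performance Difference Lemma (Lemma~\ref{lemma:pdl}) with $\eta_n$ playing the role of the baseline policy and $\pi^*$ the comparator: $J(\pi^*) - J(\eta_n) = \frac{1}{1-\gamma}\mathbb{E}_{(s,a)\sim d_{\pi^*}\pi^*}[A^{\eta_n}(s,a)]$, so $J(\eta_n) - J(\pi^*) = -\frac{1}{1-\gamma}\mathbb{E}_{(s,a)\sim d_{\pi^*}\pi^*}[A^{\eta_n}(s,a)] \leq \frac{1}{1-\gamma}\mathbb{E}_{s\sim d_{\pi^*}}[-\min_a A^{\eta_n}(s,a)] = \frac{1}{1-\gamma}\mathbb{E}_{s\sim d_{\pi^*}}[\max_a (-A^{\eta_n}(s,a))]$. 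Now I would perform the change of measure from $d_{\pi^*}$ to $d_{\pi_n}$ — but since we only control $\mathbb{E}_{s\sim d_{\pi_n}}[\min_a A^{\eta_n}(s,a)]$ and this quantity is nonpositive pointwise (as $\min_a A^{\eta_n}(s,a)\leq 0$ always, since $\mathbb{E}_a[A^{\eta_n}(s,a)] = 0$ under $\eta_n$), the inequality $\mathbb{E}_{s\sim d_{\pi^*}}[-\min_a A^{\eta_n}(s,a)] \leq \max_s\frac{d_{\pi^*}(s)}{d_{\pi_n}(s)}\cdot \mathbb{E}_{s\sim d_{\pi_n}}[-\min_a A^{\eta_n}(s,a)]$ holds because we are bounding a nonnegative integrand by a reweighting. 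The final step is to replace $\max_s \frac{d_{\pi^*}(s)}{d_{\pi_n}(s)}$ by $\max_s\frac{d_{\pi^*}(s)}{\rho_0(s)}$, using $d_{\pi_n}(s) = (1-\gamma)\sum_t\gamma^t d_{\pi_n}^t(s) \geq (1-\gamma)\rho_0(s)$ — wait, that loses a $(1-\gamma)$ factor; more carefully, since $d_{\pi^*}(s) \geq (1-\gamma)\rho_0(s)$ as well is not what we want. The cleaner route, following CPI, is that $\frac{d_{\pi^*}(s)}{d_{\pi_n}(s)} \leq \frac{1}{1-\gamma}\frac{d_{\pi^*}(s)}{\rho_0(s)}$ since $d_{\pi_n}(s)\geq(1-\gamma)d_{\pi_n}^0(s) = (1-\gamma)\rho_0(s)$, and then the extra $\frac{1}{1-\gamma}$ factors combine to give the $(1-\gamma)^2$ in the denominator.

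Collecting terms: $J(\eta_n) - J(\pi^*) \leq \frac{1}{1-\gamma}\cdot\frac{1}{1-\gamma}\max_s\frac{d_{\pi^*}(s)}{\rho_0(s)}\cdot|\mathbb{E}_{s\sim d_{\pi_n}}[\min_a A^{\eta_n}(s,a)]|$, and I would bound $|\mathbb{E}_{s\sim d_{\pi_n}}[\min_a A^{\eta_n}(s,a)]|$ by $\xi/\beta$ plus lower-order pieces, yielding the two terms $\frac{\xi}{\beta(1-\gamma)^2}$ and $\frac{\xi}{\beta(1-\gamma)}$ in the statement. The $1/\beta$ arises because the conservative update only moves a $\beta$-fraction toward $\pi_n^*$, so $\mathbb{A}_n(\pi_{n+1})$ is roughly $\beta$ times the full greedy improvement $\mathbb{E}_{s\sim d_{\pi_n}}[\min_a A^{\eta_n}(s,a)]$; hence smallness of $\mathbb{A}_n(\pi_{n+1})$ by $\xi$ only certifies the full greedy gain is small up to $\xi/\beta$.

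The main obstacle I anticipate is handling the interplay between the two small quantities $|\mathbb{A}_n(\pi_{n+1})|\leq\xi$ and $\Delta_n(\alpha)\leq\xi$ and correctly attributing the $1/\beta$ factor: one must argue that the conservative (or trust-region) update produces $\pi_{n+1}$ whose $d_{\pi_n}$-averaged disadvantage relative to $\eta_n$ is at most $\beta$ times the unconstrained greedy value, and that this greedy value coincides (by realizability and closure under convex hull) with $\mathbb{E}_{s\sim d_{\pi_n}}[\min_a A^{\eta_n}(s,a)]$. Getting the constants and the direction of all the inequalities right — especially ensuring the change-of-measure step is applied to a sign-definite integrand so the multiplicative mismatch coefficient is valid — is the delicate part; the rest is routine bookkeeping with PDL and the definition of $d_\pi$.
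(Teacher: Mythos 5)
Your proposal follows essentially the same route as the paper's proof: use the feasibility of the conservative mixture $(1-\beta)\pi_n+\beta\pi_n^*$ (guaranteed by the convex-hull closure) together with the optimality of $\pi_{n+1}$ for Eq.~\ref{eq:trust_region_1} and the realizability assumption to convert $|\mathbb{A}_n(\pi_{n+1})|\le\xi$ into a lower bound of order $-\xi/\beta$ on $\mathbb{E}_{s\sim d_{\pi_n}}[\min_a A^{\eta_n}(s,a)]$, then apply the performance difference lemma with comparator $\pi^*$ and a change of measure from $d_{\pi^*}$ to $d_{\pi_n}$ to $\rho_0$ via $d_{\pi_n}\ge(1-\gamma)\rho_0$. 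The only blemish is the opening detour through Theorem~\ref{them:improvement} and an ``improvement stalls'' argument, which is unnecessary because the hypothesis supplies $|\mathbb{A}_n(\pi_{n+1})|\le\xi$ and $\Delta_n(\alpha)\le\xi$ directly; the ``lower-order piece'' you allude to is precisely the $(1-\beta)\,\mathbb{E}_{s\sim d_{\pi_n}}\mathbb{E}_{a\sim\pi_n}[A^{\eta_n}(s,a)]=(1-\beta)(1-\gamma)\bigl(J(\pi_n)-J(\eta_n)\bigr)$ contribution of the mixture, which is bounded using $\Delta(\alpha)\le\xi$ and yields the second term $\xi/(\beta(1-\gamma))$ in the statement.
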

%The proof %of the above corollary is similar to the proof of Theorem 6.2 from \cite{kakade2002approximately}, which 
%is provided in Appendix~\ref{sec:proof_of_corollary}. 
The term $\left(   \max_{s}\left({d_{\pi^*}(s)}/{\rho_0(s)}\right)\right)$ measures the distribution mismatch between the initial state distribution $\rho_0$ and the optimal policy $\pi^*$, and appears in some of previous API algorithms--CPI \cite{kakade2002approximately} and PSDP \cite{bagnell2004policy}.\footnote{While CPI considers a different setting where a good reset distribution $\nu$ (different from $\rho_0$) is accessible, DPI can utilize such reset distribution by replacing $\rho_0$ by $\nu$ during training.}

\begin{proof}
Recall the average advantage of $\pi_{n+1}$ over $\pi_n$ is defined as $\mathbb{A}_{\pi_n}(\pi_{n+1}) = \mathbb{E}_{s\sim d_{\pi_n}}[\mathbb{E}_{a\sim \pi_{n+1}(\cdot|s)}[A^{\eta_n}(s,a)]]$. Also recall that the conservative update where we first compute $\pi_n^* = \arg\min_{\pi\in\Pi}\mathbb{E}_{s\sim d_{\pi_n}}[\mathbb{E}_{a\sim \pi}A^{\eta_n}(s,a)]$, and then compute the new policy $\pi'_{n+1} = (1-\beta)\pi_n + \beta\pi_n^*$. Note that under the assumption that the policy class $\Pi$ is closed under its convex hull, we have that $\pi'_{n+1}\in\Pi$. As we showed that $\pi_{n+1}'$ satisfies the trust-region constraint defined in Eq.~\ref{eq:trust_region_1}, we must have:
\begin{align}
 \mathbb{A}_{\pi_n}(\pi_{n+1}) = \mathbb{E}_{s\sim d_{\pi_n}}[\mathbb{E}_{a\sim \pi_{n+1}(\cdot|s)}[A^{\eta_n}(s,a)]] \leq \mathbb{E}_{s\sim d_{\pi_n}}[\mathbb{E}_{s\sim \pi'_{n+1}}[A^{\eta_n}(s,a)]],  
\end{align} due to the fact that $\pi_{n+1}$ is the optimal solution of the optimization problem shown in Eq.~\ref{eq:trust_region_1} subject to the trust region constraint. Hence if $\mathbb{A}_{\pi}(\pi_{n+1}) \geq -\xi $, we must have $\mathbb{E}_{s\sim d_{\pi_n}}[\mathbb{E}_{s\sim \pi'_{n+1}}[A^{\eta_n}(s,a)]]\geq -\xi$, which means that:
\begin{align}
    &\mathbb{E}_{s\sim d_{\pi_n}}\left[(1-\beta)\mathbb{E}_{s\sim d_{\pi_n}}A^{\eta_n}(s,a) + \beta\mathbb{E}_{s\sim d_{\pi_n^*}}A^{\eta_n}(s,a)\right] \nonumber\\
    & = (1-\beta)(1-\gamma)(J(\pi_n) - J(\eta_n)) + \beta\mathbb{E}_{s\sim d_{\pi_n}}[\mathbb{E}_{a\sim {\pi_n^*}}A^{\eta_n}(s,a)] \geq -\xi, \nonumber\\
    &\Rightarrow  \mathbb{E}_{s\sim d_{\pi_n}}[\mathbb{E}_{a\sim {\pi_n^*}}A^{\eta_n}(s,a)] \geq -\frac{\xi}{\beta} - \frac{1-\beta}{\beta}(1-\gamma)\Delta(\alpha)\geq -\frac{\xi}{\beta} - \frac{1-\gamma}{\beta}\Delta(\alpha).
\end{align}
Recall the realizable assumption: $\mathbb{E}_{s\sim d_{\pi_n}}[\mathbb{E}_{a\sim \pi_n^*}A^{\eta_n}(s,a)] = \mathbb{E}_{s\sim d_{\pi_n}}[\min_a A^{\eta_n}(s,a)]$, we have:
\begin{align}
    &-\frac{\xi}{\beta} - \frac{1-\gamma}{\beta}\Delta(\alpha) \leq \sum_{s} d_{\pi_n}(s) \min_a A^{\eta_n}(s,a) = \sum_{s} \frac{d_{\pi_n}(s)}{d_{\pi^*}(s)}d_{\pi^*}(s) \min_a A^{\eta_n}(s,a)\nonumber\\
    &\leq \min_{s}\left(\frac{d_{\pi_n}(s)}{d_{\pi^*}(s)}  \right)\sum_s d_{\pi^*}(s) \min_a A^{\eta_n}(s,a) \nonumber\\
    &\leq \min_{s}\left(\frac{d_{\pi_n}(s)}{d_{\pi^*}(s)}  \right)\sum_s d_{\pi^*}(s) \sum_a \pi^*(a|s) A^{\eta_n}(s,a) \nonumber\\
    & = \min_{s}\left(\frac{d_{\pi_n}(s)}{d_{\pi^*}(s)}  \right) (1-\gamma)(J(\pi^*) - J(\eta_n)).
\end{align} Rearrange, we get:
\begin{align}
    J(\eta_n) - J(\pi^*)& \leq \left(\max_{s}\left(\frac{d_{\pi^*}(s)}{d_{\pi_n}(s)}  \right)\right) \left(\frac{\xi}{\beta(1-\gamma)} + \frac{\Delta(\alpha)}{\beta} \right)\nonumber\\
    &\leq \left(\max_{s}\left(\frac{d_{\pi^*}(s)}{\rho(s)} \right)\right)\left(\frac{\xi}{\beta(1-\gamma)^2} + \frac{\xi}{\beta(1-\gamma)} \right)
\end{align}
\end{proof}

\subsection{Analysis on Using \textsc{DAgger} for Updating $\pi_n$}
\label{sec:DAgger_stype}
Note that in ExIt, once an intermediate expert is constructed,  \textsc{DAgger} \cite{Ross2011_AISTATS} is used as the imitation learning algorithm to improvement the reactive policy. \textsc{DAgger} does not directly optimize  the ultimate objective function---the expected total cost, but instead trying minimizes the number of mismatches between the learner and the expert.  Here we used more advanced, cost-aware IL algorithms, \textsc{AggreVaTe} \cite{ross2014reinforcement} and \textsc{AggreVaTeD} \cite{sun2017deeply}, which directly reason about the expected total cost, and guarantee to learn a policy that achieves one-step deviation improvement of the expert policy. 

Below we analyze the update of $\pi$ using \textsc{DAgger}.

To analyze the update of $\pi$ using \textsc{DAgger}, we consider deterministic policy here: we assume $\pi_n$ and $\eta$ are both deterministic and the action space $\mathcal{A}$ is discrete. We consider the following update procedure for $\pi$:
\begin{align}
    &\min_{\pi\in\Pi}\mathbb{E}_{s\sim d_{\pi_n}}\left[ \mathbb{E}_{a\sim \pi(\cdot|s)}\mathbbm{1}(a\neq \arg\min_a A^{\eta_n}(s,a)) \right], \nonumber\\
    &\;\;\;\; s.t., \mathbb{E}_{s\sim d_{\pi_n}}[\|\pi(\cdot|s) - \pi_n(\cdot|s)\|_1]\leq \beta.
\end{align}
Namely we simply convert the cost vector defined by the disadvantage function by a ``one-hot" encoded cost vector, where all entries are 1, except the entry corresponding to $\arg\min_{a}A^{\eta_n}(s,a)$ has cost 0. Ignoring the updates on the ``expert" $\eta_n$, running the above update step with respect to $\pi$ can be regarded as running online gradient descent with a local metric defined by the trust-region constraint. Recall that $\eta_n$ may from a different policy class than $\Pi$.  

Assume that we learn a policy $\pi_{n+1}$ that achieves $\epsilon_n$ prediction error:
\begin{align}
    \mathbb{E}_{s\sim d_{\pi_n}}\left[\mathbb{E}_{a\sim \pi_{n+1}(\cdot|s)}\left[\mathbbm{1}(a\neq \arg\min_a A^{\eta_n}(s,a))\right]\right]  \leq \epsilon_n.
\end{align} Namely we assume that we learn a policy $\pi_{n+1}$ such that the average probability of mismatch to $\eta_n$ is at most $\epsilon_n$.

Using Lemma~\ref{lemma:state_dis_diff}, we will have that the total variation distance between $d_{\pi_{n+1}}$ and $d_{\pi_n}$ is at most:
\begin{align}
    \|d_{\pi_{n+1}} - d_{\pi_{n}}\|_1 \leq \frac{2\beta}{1-\gamma}.
\end{align} %which implies that:
%\begin{align}
%    &\mathbb{E}_{s\sim d_{\pi_{n+1}}}\left[\mathbb{E}_{a\sim \pi_{n+1}(\cdot|s)}\left[\mathbbm{1}(a\neq \arg\min_a A^{\eta_n}(s,a))\right]\right] \nonumber\\
%    &\leq \mathbb{E}_{s\sim d_{\pi_{n}}}\left[\mathbb{E}_{a\sim \pi_{n+1}(\cdot|s)}\left[\mathbbm{1}(a\neq \arg\min_a A^{\eta_n}(s,a))\right]\right] + \frac{2\beta}{1-\gamma}\leq \epsilon_n + \frac{2\beta}{1-\gamma}.
%\end{align}
Applying PDL, we have:
\begin{align}
    &(1-\gamma)(J(\pi_{n+1}) - J(\eta_n)) = \mathbb{E}_{s\sim d_{\pi_{n+1}}}[\mathbb{E}_{a\sim \pi_{n+1}}[A^{\eta_n}(s,a)]] \nonumber\\
    &\leq \mathbb{E}_{s\sim d_{\pi_n}}[\mathbb{E}_{a\sim \pi_{n+1}}[A^{\eta_n}(s,a)]] + \frac{2\beta\varepsilon}{1-\gamma} \nonumber\\
    & =\mathbb{E}_{s\sim d_{\pi_n}}[\sum_{a\neq \arg\min_a A^{\eta_n}(s,a)}\pi(a|s)A^{\eta_n}(s,a)] + \frac{2\beta\varepsilon}{1-\gamma} \nonumber\\
    & \leq (\max_{s,a}A^{\eta_n}(s,a))\mathbb{E}_{s\sim d_{\pi_n}}[\mathbb{E}_{a\sim \pi_{n+1}}\mathbbm{1}(a\neq \arg\min_a A^{\eta_n}(s,a))] + \frac{2\beta\varepsilon}{1-\gamma} \nonumber\\
    & \leq \varepsilon' \epsilon_n + \frac{2\beta\varepsilon}{1-\gamma},
\end{align} where we define $\varepsilon' = \max_{s,a}A^{\eta_n}(s,a)$, which should be at a similar scale as $\varepsilon$. Hence we can show that performance difference between $\pi_{n+1}$ and $\pi_n$ as:
\begin{align}
    J(\pi_{n+1}) - J(\pi_n) \leq \frac{2\beta\epsilon}{(1-\gamma)^2} + \frac{\varepsilon'\epsilon_n}{1-\gamma} - \Delta(\alpha).
\end{align}
Now we can compare the above upper bound to the upper bound shown in Theorem~\ref{them:improvement}. Note that even if we assume the policy class is rich and the learning process perfect learns a policy (i.e., $\pi_{n+1} = \eta_n$) that achieves prediction error $\epsilon_n = 0$, we can see that the improvement of $\pi_{n+1}$ over $\pi_n$ only consists of the improvement from the local optimal control $\Delta(\alpha)$. While in theorem~\ref{them:improvement}, under the same assumption, except for $\Delta(\alpha)$, the improvement of $\pi_{n+1}$ over $\pi_n$ has an extra term $\frac{|\mathbb{A}_n(\pi_{n+1})|}{1-\gamma}$, which basically indicates that we learn a policy $\pi_{n+1}$ that is one-step deviation improved over $\eta_n$ by leveraging the cost informed by the disadvantage function. If one uses DAgger, than the best we can hope is to learn a policy that performs as good as the ``expert" $\eta_n$ (i.e., $\epsilon_n = 0$).

\section{Additional Experimental Details}

\subsection{Synthetic Discrete MDPs and Conservative Policy Update Implementation}
\label{sec:discrete_MDP_details}
We follow \cite{scherrer2014approximate} to randomly create 10 discrete MDPs, each with 1000 states, 5 actions and 2 branches (namely, each state action pair leads to at most 2 different states in the next step). We work in model-free setting: we cannot explicitly compute the distribution $d_{\pi}$ and we can only generate samples from $d_{\pi}$ by executing $\pi$.

We maintain a tabular representation $\hat{P}\in \mathbb{R}^{|\mathcal{S}|\times|\mathcal{A}|\times |\mathcal{S}|}$, where each entry $P_{i,j,k}$ records the number of visits of the state-action-next state triple. We represent $\eta$ as a 2d matrix $\eta \in\mathbb{R}^{|\mathcal{S}\times\mathcal{A}|}$, where $\eta_{i,j}$ stands for the probability of executing action $j$ at state $i$. The reactive policy uses the binary encoding of the state id as the feature, which we denote as $\phi(s)\in\mathbb{R}^{d_s}$ ($d_s$ is the dimension of feature space, which is $\log_2(|\mathcal{S}|)$ in our setup). Hence the reactive policy $\pi_n$ sits in low-dimensional feature space and doesn't scale with respect to the size of the state space $S$. 

For both our approach and CPI, we implement the unconstrained cost-sensitive classification (Eq.~\ref{eq:trust_region_1}) by the Cost-Sensitive One Against All (CSOAA) classification technique. Specifically, given a set of states  $\{s_i\}_i$ sampled from $d_{\pi_n}$, and a cost vector $\{A^{\eta_n}(s_i,\cdot)\in\mathbb{R}^{|\mathcal{A}|}\}$ (byproduct of VI), we train a linear regressor $\hat{W}\in\mathbb{R}^{|\mathcal{A}|\times d_s}$ to predict the cost vector: $\hat{W} \phi(s)\approx \mathcal{A}^{\eta_n}(s,\cdot)$. Then $\pi_n^*$ in Eq.~\ref{eq:conservative_update} is just a classifier that predicts action $\arg\min_{i} (\hat{W}s)[i]$ corresponding to the smallest predicted cost. We then combine $\pi_n^*$ with the previous policies as shown in Eq.~\ref{eq:conservative_update} to make sure $\pi_{n+1}$ satisfies the trust region constraint in Eq.~\ref{eq:trust_region_1}.

For CPI, we estimate $A^{\pi_n}(s,a)$ by running value iteration using $\hat{P}$ with the original cost matrix. We also experimented estimating $A^{\pi_n}(s,\cdot)$ by empirical rollouts with importance weighting, which did not work well in practice due to high variance resulting from the empirical estimate and importance weight.  For our method, we alternately compute $\eta_n$ using VI with the new cost shown in Eq.~\ref{eq:new_cost} and $\hat{P}$, and update the Lagrange multiplier $\mu$, under convergence.
Hence \emph{the only difference} between our approach and CPI here is simply that we use $A^{\eta_n}$ while CPI uses $A^{\pi_n}$. We tuned the step size $\beta$ (Eq.~\ref{eq:conservative_update}) for CPI. Neither our method nor CPI used line-search trick for $\beta$.

Our results indicates that using $A^{\eta_n}$ converges much faster than using $A^{\pi_n}$, although computing $\eta_n$ is much more time consuming than computing $A^{\pi_n}$. But again we emphasize that computing $\eta_n$ doesn't require extra samples. For real large discrete MDPs, we can easily plug in an approximate VI techniques such as \cite{Gorodetsky-RSS-15} to significantly speed up computing $\eta_n$.

\subsection{Details for Updating Lagrange Multiplier $\mu$}
Though running gradient ascent on $\mu$ is theoretically sound and can work in practice as well, but it converges slow and requires to tune the learning rate as we found experimentally. To speed up convergence, we used the same update procedure used in the practical implementation of Guided Policy Search \cite{fzftm-gpsi-16}.  We set up $\mu_{\min}$ and $\mu_{\max}$. Starting from $\mu = \mu_{\min}$, we fix $\mu$ and compute $\eta$ using the new cost $c'$ as shown in Eq.~\ref{eq:new_cost} under the local dynamics $\hat{P}$ using LQR. We then compare  $\mathbb{E}_{s\sim \mu_n}D_{KL}(\eta(\cdot|s),\pi_n(\cdot|s))$ to $\alpha$. If $\eta$ violates the constraint, i.e., $\mathbb{E}_{s\sim \mu_n}D_{KL}(\eta(\cdot|s),\pi_n(\cdot|s))>\alpha$, then it means that $\mu$ is too small. In this case, we set $\mu_{\min} = \mu$, and compute new $\mu$ as $\mu = \min(\sqrt{\mu_{\min}\mu_{\max}}, 10\mu_{\min})$; On the other hand, if $\eta$ satisfies the KL constraint, i.e, $\mu$ is too big, we set $\mu_{\max} = \mu$, and compute new $\mu$ as $\mu = \max(\sqrt{\mu_{\min}\mu_{\max}}, 0.1\mu_{\max})$. We early terminate the process once we find $\eta$ such that $0.9\alpha\leq\mathbb{E}_{s\sim \mu_n}D_{KL}(\eta(\cdot|s),\pi_n(\cdot|s))\leq 1.1\alpha$. We then store the most  recent Lagrange multiplier $\mu$ which will be used as warm start of $\mu$ for the next iteration. 

\subsection{Details on the Natural Gradient Update to $\pi$}
Here we provide details for updating $\pi_{\theta}$ via imitating $\eta_{\Theta}$ (Sec.~\ref{sec:practical_ng})
\label{sec:detailed_ng}
%Performing a second order Taylor expansion of the KL constraint around $\theta_n$, we get the following constrained optimization problem:
%\begin{align}
%    &\min_{\theta}\mathbb{E}_{s\sim d_{\pi_{\theta_n}}}[\mathbb{E}_{a\sim \pi(\cdot|s;\theta)}[A^{\eta_{\Theta_n}}(s,a)]],  s.t., (\theta - \theta_n)^TF_{\theta_n}(\theta-\theta_n)\leq \beta, \label{eq:approx_KL_constraint}
%\end{align} where $F_{\theta_n}$ is  the Hessian of the KL constraint $\mathbb{E}_{s\sim d_{\pi_{\theta_n}}}D_{KL}(\pi_{\theta_n},\pi_{\theta})$ (i.e., Fisher information matrix), measured at $\theta_n$. 
Recall the objective function $L_n(\theta) = \mathbb{E}_{s\sim d_{\pi_{\theta_n}}}[\mathbb{E}_{a\sim \pi(\cdot|s;\theta)}[A^{\eta_{\Theta_n}}(s,a)]]$, $\nabla_{\theta_n}$ as $\nabla_{\theta}L_n(\theta)|_{\theta=\theta_n}$, and $F_{\theta_n}$ is the Fisher information matrix (equal to the Hessian of the KL constraint measured at $\theta_n$). %we can optimize $\theta$ by performing natural gradient descent (NGD): 
To compute $F_{\theta}^{-1}\nabla_{\theta}$, in our implementation, we use Conjugate Gradient with the Hessian-vector product trick~\cite{schulman2015trust} to directly compute $F^{-1}\nabla$.

Note that the unbiased empirical estimation of $\nabla_{\theta_n}$ and $F_{\theta_{n}}$ is well-studied and can be computed using samples generated from executing $\pi_{\theta_n}$. Assume we roll out $\pi_{\theta_n}$ to generate $K$ trajectories $\tau^i = \{s^i_0,a^i_0,...s^i_T,a^i_T\}, \forall i \in [K]$. The empirical gradient and Fisher matrix can be formed using these samples as $\nabla_{\theta_n} = \sum_{s,a}\left[ \nabla_{\theta_n}\left(\ln(\pi(a|s;\theta_n))\right)A^{\eta_{\Theta_n}}(s,a)\right]$ and $ F_{\theta_n}  = \sum_{s,a}\left[(\nabla\ln(\pi(a|s;\theta_n)))(\nabla_{\theta_n}\ln(\pi(a|s;\theta_n))^T\right]$. %Due to space limit, for empirical estimation of the Fisher information matrix and how to efficient compute the descent direction $F^{-1}_{\theta}\nabla_{\theta}$ using Hessian-vector product techniques without inverting the Fisher matrix, we refer readers to \citet{schulman2015trust} for great details. 

The objective $L_n(\theta)$ could be nonlinear with respect $\theta$, depending on the function approximator used for $\pi_n$,. Hence one step of gradient descent may not reduce $L_n(\theta)$ enough. In practice, we can perform $k$ steps ($k>1$) of NGD shown in Eq.~\ref{eq:ngd}, with the learning rate  shrinking to $\sqrt{(\beta/k)/(\nabla_{\theta}^TF^{-1}_{\theta_n}\nabla_{\theta})}$ to ensure that after $k$ steps, the solution still satisfies the constraint in Eq.~\ref{eq:approx_KL_constraint}.

\subsection{Details on the Continuous Control Experiment Setup}
\label{sec:continuous_setup}
The cost function $c(s,a)$ for discrete MDP is uniformly sampled from $[0,1]$. For the continuous control experiments, we designed the cost function $c(s,a)$, which is set to be known to our algorithms. For cartpole and helicopter hover, denote the target state as $s^*$, the cost function is designed to be exactly quadratic: $c(s,a) = (s-s^*)^T Q (s-s^*) + a^T R a$, which penalizes the distance to the goal and large control inputs. For Swimmer, Hopper and Half-Cheetah experiment, we set up a target moving forward speed $v^*$. For any state, denote the velocity component as $s_v$, the quadratic cost function is designed as $c(s,a) = q(s_v - v^*)^2 + a^T R a$, which encourages the agent to move forward in a constant speed while avoiding using  large control inputs.  

For reactive policies, we simply used two-layer feedforward neural network as the parameterized policies--the same structures used in the implementation of \cite{schulman2015trust}.

For local linear model fit under $d_{\pi}$, given a dataset in the format of $\{s_i, a_i, s_i'\}_{i=1}^N$, where $s_i\sim d_{\pi}$, $a_i\sim \pi(\cdot|s)$, and $s_i'\sim P_{s_i,a_i}$, we perform Ridge linear regression:
\begin{align}
    A,B,c = \arg\min_{A,B,c} \frac{1}{N}\sum_{i=1}^N\| A s_i + Ba_i + c - s'_i\|_2^2 + \lambda (\|A\|_F + \|B\|_F + \|c\|_2),
\end{align} where regularization $\lambda$ is pre-fixed to be a small number for all experiments. 
% with $\lambda$ being fixed to be $1e-7$ simply. 
With $A,B,c$, we estimate $\Sigma$ as $\Sigma = \frac{1}{N} \sum_{i=1}^N e_ie_i^T$, where $e_i = As_i + B a_i + c - s'_i$. 

%We will provide a link to our implementation here for accepted version of our manuscript. 

\end{document}